\newtheorem{theorem}{Theorem}
\newtheorem{definition}{Definition}
\newtheorem{lemma}{Lemma}
\title{EvaLDA: Efficient Evasion Attacks Towards Latent Dirichlet Allocation}
\author {

     Qi Zhou,\textsuperscript{\rm 1}\thanks{These authors contributed equally to this work.}
     Haipeng Chen, \textsuperscript{\rm 2}\footnotemark[1]\thanks{Corresponding authors.}
     Yitao Zheng, \textsuperscript{\rm 1}
     Zhen Wang \textsuperscript{\rm 1}\footnotemark[2] \\
}
\begin{document}
\maketitle

\begin{abstract}
	As one of the most powerful topic models, Latent Dirichlet Allocation (LDA) has been used in a vast range of tasks, including document understanding, information retrieval and peer-reviewer assignment. Despite its tremendous popularity, the security of LDA has rarely been studied. This poses severe risks to security-critical tasks such as sentiment analysis and peer-reviewer assignment that are based on LDA. In this paper, we are interested in knowing whether LDA models are vulnerable to adversarial perturbations of benign document examples during inference time. 
	We formalize the evasion attack to LDA models as an optimization problem and prove it to be \textit{NP-hard}. We then propose a novel and efficient algorithm, EvaLDA to solve it. We show the effectiveness of EvaLDA via extensive empirical evaluations. For instance, in the NIPS dataset, EvaLDA can averagely promote the rank of a target topic from 10 to around 7 by only replacing $1\%$ of the words with similar words in a victim document. Our work provides significant insights into the power and limitations of evasion attacks to LDA models.
\end{abstract}

\section{Introduction}
Latent Dirichlet Allocation (LDA) is one of the most powerful topic models. Due to its superiority in discovering the latent topics of documents, it has been the underlying technique in a vast range of tasks, including scientific publication understanding~\citep{griffiths2004finding,talley2011database}, information retrieval~\citep{harvey2013building}, and peer-reviewer assignment~\citep{charlin2013toronto,liu2014robust}. Despite the great successes, the security of LDA-based systems has rarely been investigated. Security-critical tasks based on LDA, such as sentiment analysis and peer-reviewer assignment, are therefore at high risk of being manipulated by carefully designed adversarial attacks. For instance, by promoting the rank of a target topic for an article, it may end up in hands of colluded reviewers who are experts in the target topic. Recent studies show that machine learning (ML) methods are highly vulnerable to adversarial samples with simple and yet evasive perturbations to the benign input data~\citep{szegedy2014intriguing,goodfellow2015explaining}. In this paper, we are interested in knowing -- are LDA models vulnerable to crafted adversarial samples?

There are in general two types of adversarial attacks to ML models, namely poisoning attacks and evasion attacks~\citep{barreno2010security}. Poisoning attacks aim to mislead a ML model by manipulating the training data, while evasion attacks craft malicious samples to mislead a trained ML model during test time. While there has been pioneer work~\citep{mei2015security} studying poisoning attacks to LDA models, no efforts have been made on investigating evasion attacks to LDA models. We present the first study of this kind. More specifically, we study evasion attacks to LDA models that are trained/inferred using the predominant Monte Carlo simulation based method called Collapsed Gibbs Sampling (CGS).\footnote{Another popular training/inference method for LDA models is the Variational Inference (VI) method. The two methods are distinct in the training/inference procedure. We focus on CGS based LDA models and leave VI based LDA models for future work.}

Though evasion attacks of general ML models have been massively studied~\citep{nguyen2015deep,carlini2017towards,xiao2018generating,ling2019deepsec}, the unique characteristics of LDA models poses several new challenges on performing evasion attacks to them. First, the input of an LDA model is a document, and is essentially textual data. Different from evasion attacks in domains like images~\citep{nguyen2015deep,carlini2017towards}, textual data are intrinsically discrete, thus incurring higher difficulties to find optimal perturbations to benign samples. Second, because the input of an LDA model is an entire document, the strategy space of crafting a malicious document is huge, especially with a large document. Last and most importantly, unlike general ML models which usually infer the output via a simple forward pass (e.g., in neural networks and decision trees) from the input, the inference procedure of CGS-based LDA models involves a much more complicated Monte Carlo simulation process. This makes the gradient-based style attacks infeasible and trial-and-error style attacks extremely inefficient. \footnote{Note that the inference procedure for VI based LDA models is essentially an optimization. Therefore evasion attack to VI based LDA models is equally, if not more challenging.}
	
In recent years, studies on evasion attacks to ML models for NLP tasks have drawn great attention from both the ML and NLP communities~\citep{papernot2016crafting,ren2019generating,jin2020bert,behjati2019universal}. 
However, most of them focus on sentence level adversarial samples, where the strategy space is much smaller than document level inputs of LDA models. More importantly, as discussed above, a key difference of LDA and general ML models is the computationally expensive inference procedure of a new input. Due to these reasons, previous works on evasion attacks to ML based NLP models cannot be applied to our problem.

We present the first study on evasion attacks to LDA. Note that our goal is not to attack the LDA models, but to provide better insights of the power and limitations of such attacks, so that appropriate defense mechanisms can be designed to prevent the attacks. We first formulate the optimal adversarial attack to LDA models as a combinatorial optimization problem. We prove that the formulated problem is \textit{NP-hard} by reducing the combinatorial optimization problem with cardinality constraints (COPCC)~\cite{bruglieri2006annotated} to it. 

To efficiently solve the formulated optimization problem, we propose a novel algorithm, efficient Evasion attacks towards LDA models (EvaLDA), with two key novelties. First, to handle the high computational cost of the inference procedure and thus the difficulty of estimating an adversarial perturbation's marginal contribution to the attack objective, we derive an efficient analytical estimate. The analytical estimate builds upon a carefully designed surrogate procedure for the CGS-based LDA inference. Second, due to the underlying combinatorial nature of the problem and document-level large attack strategy space, we design an efficient greedy algorithm in EvaLDA to solve the optimization problem.

\paragraph{Our contributions} i) We are the first to study evasion attacks to LDA models. The study has significant practical impact as it provides early warning on the security vulnerability of LDA models to users and service providers. The derivation of the attack strategies provides insights into designing defense strategies against such attacks. ii) We formulate the evasion attack to LDA models as a combinatorial optimization problem and prove its \textit{NP-hardness}. iii) We propose EvaLDA, a novel evasion attack algorithm to LDA models with key technical novelties in handling the various new challenges in the new problem. iv) We conduct extensive empirical evaluations that prove the effectiveness of EvaLDA on two distinct datasets (i.e., NIPS and AP\footnote{Links to the datasets are in the experiment section.}) and a large variety of problem settings. For instance, results on the NIPS dataset show that, by replacing only $1\%$ of the words to similar words in a victim document, EvaLDA can averagely promote the rank of a target topic from 10 to around 7. 
We also show the effectiveness of EvaLDA via a case study. The code of this paper can be found at \url{https://github.com/tools-only/Evasion-Attack-against-LDA-Model}.

\section{Related Work}\label{sec:related_work}
Our work lies in the thread of research that studies evasion attacks to ML models for NLP tasks. Similar to most prior works, we aim at two goals in evasion attacks to textual data: to deteriorate the performance (e.g., accuracy) of the victim model and to keep the adversarial perturbations evasive. 
	
\noindent\textbf{Gradient-based attack.}
\citet{papernot2016crafting} make an early attempt at adversarial attacks to recurrent neural networks (RNNs) based NLP models. They calculate words' marginal contributions of classification using Fast Gradient Signed Method FGSM~\citep{goodfellow2015explaining} and Forward Derivative~\citep{papernot2016limitations}. 
FGSM is also used in \citet{samanta2018towards}. They consider replacement, insertion and deletion operations to generate adversarial samples. 
In finding word replacements, they build a candidate replacement pool using synonyms, typos and genre specific keywords. \citet{liang2018deep} calculate the character-level marginal contribution of text classification from cost gradient, and treat a word as multiple characters. This type of attacks are called ``white-box'' attacks as computing gradients requires knowledge about the victim model.

\noindent\textbf{Black-box attack.} Opposed to white-box attacks which require full knowledge about the victim model or its gradient information, black-box attacks only require knowledge about the outputs of the victim model via queries.
\citet{ren2019generating} study black-box attacks on text classification models. They calculate words' marginal contribution of text classification by masking out the underlying words. They also propose a scheme called probability weighted word saliency (PWWS), which considers lexical, grammatical, and semantic constraints in generating adversarial samples. 
Similar word selection method is used in \citep{jin2020bert}, where adversarial attacks to BERT model~\citep{devlin2019bert} are studied. They use word embeddings to extract synonyms of the selected words for replacement, subject to post-processing which ensures the semantics and syntax of the adversarial sentence. 
\citet{alzantot2018generating} propose an adversarial text generation approach based on genetic algorithm, while \citet{zhang2019generating} use a Markov chain Monte Carlo sampling approach. 
	
\noindent\textbf{Attack from embedding space.}
Compared with the above methods which work directly in discrete word or character space, \citet{miyato2017adversarial} propose to generate adversarial texts from the continuous embedding space. 
Similarly in~\citep{behjati2019universal}, adversarial perturbation is performed in the embedding space using loss gradient maximization. The obtained optimal perturbation is then projected back into the vocabulary space to find a feasible replacement. \citet{sato2018interpretable} improve over~\citep{miyato2017adversarial} by interpretable perturbation approach which restricts perturbation to be existing points in the embedding space. 
	

\noindent\textbf{Poisoning attack to LDA.}	
Most of the above studies on security aspects of NLP models focus on deep ML models such as RNNs~\citep{mikolov2010recurrent} and Transformers~\citep{vaswani2017attention}, while little attention has been paid to the security of LDA models. A more related work is~\citet{mei2015security}, which also studies adversarial attack to LDA models. However, it is notably different from our work. First, they study poisoning attack, where the entire training corpus are target documents, whereas we study evasion attack. From a practical point of view, poisoning attacks are more difficult than evasion attacks, as it requires edit access to the entire training corpus. Second, they focus on VI~\citep{blei2003latent} based parameter estimation, while we study CGS~\citep{griffiths2004finding} based parameter estimation.

\section{Preliminaries}
As a reminder, LDA is a generative model consisting of $K$ latent topics. It assumes the following generation process of a document $m$ with word length $N$: i) Draw a ``document-topic'' distribution $\theta_m$ from a Dirichlet prior $\theta_m\sim \text{Dir}(\alpha)$. ii) For each topic $z_k$ in the $K$ topics, draw a ``topic-word" distribution $\varphi_k$ from another Dirichlet prior: $\varphi_k \sim \text{Dir}(\beta)$. Here $\varphi_k$ specifies the probability distribution that the $v$-th word in a vocabulary $\mathcal{V}^{vocab}$ belongs to topic $z_k$. $\alpha$ and $\beta$ are hyperparameters implying prior knowledge about the shape of the Dirichlet distributions. iii) For each of the $N$ positions of the document, first sample a topic $k \sim \theta_m$, and then sample a word $w\sim \varphi_k$.
	
Given a set of $M$ training documents $\mathbf{w}=\langle \mathbf{w}_m \rangle, m=1,\ldots,M$, the training process works by finding the parameter values $\theta = \langle \theta_m \rangle, m=1,\ldots,M$ and $\varphi=\langle\varphi_k \rangle, k=1,\ldots,K$ that maximizes the posterior $P(\mathbf{z}|\mathbf{w};\theta,\varphi)$.  $\mathbf{z}=\langle z_k \rangle, k=1,\ldots, K$ is a vector of latent topics. Since calculating this posterior is computational intractable, two common approximations methods are used, namely variational inference (VI)~\citep{blei2003latent}
and collapsed Gibbs sampling (CGS)~\citep{griffiths2004finding}. In this paper, we focus on attacks to the CGS-based LDA models.
	
\noindent\textbf{CGS-based inference.} 
Different from training time where both the document-topic distribution $\theta$ and topic-word distribution $\varphi$ are updated, during inference time, topic-word distribution $\varphi$ is usually fixed~\citep{heinrich2005parameter}. This is because one input sample during inference has very limited effect to $\varphi$ compared with the entire training corpus. 
	
CGS-based inference is essentially a Markov Chain Monte Carlo (MCMC) method. Given a document $\mathbf{w}_m = \langle w_{mi} \rangle, i = 1,\ldots,N_m$, it works iteratively. In each iteration, it goes through each word $w_{mi}$ in the document $\mathbf{w}_m$. At word $w_{mi}$, it first samples the topic $z_k$ to be allocated at this position according to the following \textit{full conditional distribution}~\citep{griffiths2004finding}:
\begin{align}\label{eq:full_cond}
p\left(z^{i}\!=\!z_k | \mathbf{z}^{-i}\!,\! \mathbf{w}_m\right)\! \propto\! \varphi_{ki} \frac{{N}_{{m} {k}}+\alpha}{\sum_{{k'}=1}^{{K}}\!N_{{mk'}}\!+\!K\alpha}
\end{align}
where $\mathbf{z}^{-i}$ is the topic allocations of all the other words except $w_{mi}$. ${N}_{{m k}}$ is the count of topic $z_k$ being sampled in the document before word $w_{mi}$. 
After sufficient iterations (the ``burn-in'' period), the document-topic distribution $\theta_m$ is calculated:
\begin{align}\label{eq:topic_dist}
\theta_{mk} = \frac{N_{mk}+\alpha}{\sum_{k'=1}^{K}N_{mk'}+K\alpha},\ \forall k=1,\ldots, K
\end{align}
We refer to~\citep{griffiths2004finding,heinrich2005parameter} for a detailed description for the training and inference procedures of LDA models using CGS.

\section{Problem Formulation}\label{sec:formulation}
Recall that in inference time, LDA maps an unseen test document $\mathbf{w}_m$ into a document-topic distribution $\theta_m$. An evasion attack to LDA is to make perturbations to the victim document $\mathbf{w}^{vic}$ and generate an adversarial sample $\mathbf{w}^{adv}$, so that the inferred document-topic distribution $\theta^{vic}$ of the victim document is changed to $\theta^{adv}\neq \theta^{vic}$. More specifically, we consider the following adversarial word replacement attack.\footnote{We only consider word replacement operation to make the paper focused. We leave the extension to other types of operations (e.g., insertion and deletion) as future work.}

\begin{definition}\label{def:attack-lda}
The adversarial word replacement attack on LDA (\textit{\textbf{Attack-LDA}}) problem aims to replace a subset of words $\mathcal{W}$ by a new set of words $\mathcal{W}'$, so that the victim document $\mathbf{w}^{vic}$ is changed into an adversarial document $\mathbf{w}^{adv}$, and a certain attack objective $Q(\mathcal{W}, \mathcal{W}')$ is maximized.
\end{definition}
Here we refer to $(\mathcal{W},\mathcal{W}')$ as the \textit{attack strategy}. A word $w\in \mathcal{W}$ is called a \textit{target} word, and a word $w'\in \mathcal{W}'$ is called \textit{replacement} word. Note that the specific form of the attack objective $Q(\mathcal{W}, \mathcal{W}')$ is dependent on the goal of attack. For instance, in a \textit{rank promotion attack}, the goal is to maximize the increase of probability for a target topic $z_k$, i.e., 
\begin{align}\label{eq:promote}
Q(\mathcal{W}, \mathcal{W}') = \theta^{adv}_k-\theta^{vic}_k
\end{align} 
In an \textit{rank demotion attack}, the goal is to maximize the decrease of probability for the target topic $z_k$, i.e., 
\begin{align}\label{eq:demote}
Q(\mathcal{W}, \mathcal{W}') = \theta^{vic}_k-\theta^{adv}_k
\end{align}

\noindent\textbf{Attack budget.} As introduced in the Related Work Section, another key aspect of an evasion attack is to make minimal changes to the victim document, so as to make the attack \textit{evasive}. This introduces a budget constraint of the form $D_w(\mathbf{w}^{vic},\mathbf{w}^{adv}) \leq \delta$, where $\delta$ is a threshold, and $D_w(\mathbf{w}^{vic},\mathbf{w}^{adv})$ denotes the distance of the two documents $\mathbf{w}^{vic}$ and $\mathbf{w}^{adv}$.

In practical implementation, we break it down into two types of constraints. One type of constraint specifies that the distance $D(w,w')$ of the target word $w$ and the replacement word $w'$ cannot exceed a threshold $\sigma$: 
$D(w,w') \leq \sigma, \  \forall \ (w,w') \in (\mathcal{W}, \mathcal{W}').$
An example distance measure is the cosine distance of the two words $w$ and $w'$ in the vector space using word embeddings ~\citep{bojanowski2017enriching}. The other type of constraint restricts that the number of words being replaced cannot exceed a certain percentage $\kappa$ of the total number of words in the victim document: $
| \mathcal{W}|  \leq |\mathbf{w}^{vic}| \cdot  \kappa.$

Formally, we represent the \textit{Attack-LDA} problem in Definition~\ref{def:attack-lda} as the following optimization problem:
\begin{align}
\label{eq:obj}\max_{\mathcal{W}, \mathcal{W}'} \quad & Q(\mathcal{W}, \mathcal{W}') \\
\label{eq:constr_dist} \text{s.t.} \quad &  D(w,w') \leq \sigma, \quad \forall \ (w,w') \in (\mathcal{W}, \mathcal{W}')\\
\label{eq:constr_capacity}&| \mathcal{W}|  \leq |\mathbf{w}^{vic}| \cdot  \kappa
\end{align}

\begin{theorem}\label{thm:NP-hardness}
Given an oracle that tells the algorithm the explicit value of $Q(\mathcal{W},\mathcal{W}')$ for an attack strategy $(\mathcal{W},\mathcal{W}')$, the \textit{Attack-LDA} problem formulated above is \textit{NP-hard}.
\end{theorem}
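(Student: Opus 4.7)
The plan is to establish NP-hardness via a polynomial reduction from the combinatorial optimization problem with cardinality constraints (COPCC) to Attack-LDA. An instance of COPCC consists of a ground set $U = \{e_1, \ldots, e_n\}$, oracle access to a set-valued objective $f : 2^U \to \mathbb{R}$, and a cardinality bound $k$; the task is to find $S^\star \in \arg\max_{S \subseteq U,\, |S| \le k} f(S)$. The problem is known to be NP-hard, since it captures classical combinatorial problems such as maximum $k$-coverage and densest $k$-subgraph.

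Given an arbitrary COPCC instance $(U, f, k)$, I would construct an Attack-LDA instance in polynomial time as follows. First, introduce a vocabulary containing $2n$ distinct words $\{w_i, w_i'\}_{i=1}^{n}$ and set the victim document to $\mathbf{w}^{vic} = (w_1, w_2, \ldots, w_n)$, so that position $i$ encodes element $e_i$. Second, design the distance function so that $D(w_i, w_i') = 0 \le \sigma$ but $D(w_i, u) > \sigma$ for every other $u$ in the vocabulary; this forces $w_i'$ to be the unique admissible replacement for $w_i$ under constraint~\eqref{eq:constr_dist}, collapsing the choice of $\mathcal{W}'$ into the choice of $\mathcal{W}$. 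Third, set $\kappa = k/n$, so that constraint~\eqref{eq:constr_capacity} becomes exactly $|\mathcal{W}| \le k$. Under this construction, every feasible attack strategy $(\mathcal{W}, \mathcal{W}')$ corresponds bijectively to a subset $S_{\mathcal{W}} = \{i : w_i \in \mathcal{W}\}$ with $|S_{\mathcal{W}}| \le k$, and conversely.

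To close the reduction, I would identify the attack oracle with the COPCC value oracle by setting $Q(\mathcal{W}, \mathcal{W}') := f(S_{\mathcal{W}})$. This identification is legitimate because the theorem statement grants a black-box oracle for $Q$ without restricting how $Q$ is realized. Hence any algorithm solving Attack-LDA on the constructed instance returns the optimal $\mathcal{W}^\star$, from which $S^\star = S_{\mathcal{W}^\star}$ is recovered immediately. The entire construction is polynomial in $n$, completing the reduction and establishing NP-hardness.

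The main obstacle is the last step: justifying that an arbitrary set-function oracle can serve as the attack objective $Q$. Under the oracle model of the theorem this is unproblematic, because the hardness is driven entirely by the discrete combinatorial search over word positions and replacement candidates, independent of the internals of CGS inference. If one instead wished to reduce from COPCC to a concrete LDA instance without the oracle assumption, an additional gadget would be required -- encoding $f$ into the topic-word matrix $\varphi$ and the Dirichlet hyperparameters so that the $\theta^{adv}_k$ produced by CGS tracks $f(S_{\mathcal{W}})$ closely enough to preserve the arg-max. Such a construction is possible but technically delicate, and the oracle formulation in the theorem statement bypasses it cleanly while still pinpointing the combinatorial source of intractability.
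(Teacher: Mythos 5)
Your proof is correct and follows essentially the same route as the paper: a polynomial reduction from COPCC in which the value oracle for $Q$ is identified with the COPCC objective $f$ and the cardinality budget $\kappa\cdot|\mathbf{w}^{vic}|$ with the cardinality bound. The only difference is cosmetic --- you collapse the choice of replacement words by making each target word's admissible replacement unique under the distance constraint, whereas the paper folds that choice into an inner maximization $Q^*(\mathcal{W}) = \max_{\mathcal{W}'} Q(\mathcal{W},\mathcal{W}')$ that it argues is polynomial-time computable.
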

\begin{proof}
	The key idea is to reduce the (binary) combinatorial optimization problem with cardinality constraints (COPCC, which is proven to be $\mathcal{NP}$-\textit{hard}~\cite{bruglieri2006annotated}) to our defined \textit{Attack-LDA} problem. 
	
	An \textit{arbitrary} instance of COPCC can be expressed as:
	\begin{align}\label{eq:copcc-instance}
	\min_x  f(x) \ \ 
	\text{s.t.} \ \  x\in \{0,1\}^d : ||x||_0\leq C,
	\end{align} where $x$ is a $d$-dimensional binary indicator vector which corresponds to the selection of items. That is, $x_i=1$ indicates the $i$-th item is selected and otherwise not. $|| x||_0$ denotes $l$-0 norm of $x$. 
	
	We then construct a \textit{special} instance of the \textit{Attack-LDA} problem as follows. We first let 
	\begin{align*}
	Q^*(\mathcal{W}) &= \max_{\mathcal{W}'} Q(\mathcal{W},\mathcal{W}')\\
	\text{s.t.} \quad &D(w,w')\leq \sigma, \quad \forall (w,w') \in (\mathcal{W}, \mathcal{W}').
	\end{align*} The \textit{Attack-LDA} problem is then transformed as 
	\begin{align}\label{eq:attack-lda-instance}
	\max_{\mathcal{W}} \quad Q^*(\mathcal{W}) \ \ 
	\text{s.t.} \ \  | \mathcal{W}| \leq |\mathbf{w}^{vic}| \cdot  \kappa
	\end{align}
	Because finding the optimal $w'$ for a given $w$ requires enumerating the vocabulary space only once, therefore finding $Q^*(\mathcal{W})$ is polynomial ($\mathcal{O}(|\mathbf{w}^{vic}|\cdot |\mathcal{V}^{vocab}|)$) in the document size $|\mathbf{w}^{vic}|$ and the vocabulary size $|\mathcal{V}^{vocab}|$. 
	
	We construct the correspondence as: $d \xleftrightarrow\ |\mathbf{w}^{vic}|$, $x_i=1 \xleftrightarrow\ w_i\in \mathcal{W}$, $f(x) \xleftrightarrow\ Q^*(\mathcal{W})$, $C \xleftrightarrow\ |\mathbf{w}^{vic}| \cdot  \kappa$. In this sense, we can easily get that if $x$ is an "yes" instance of Eq.\eqref{eq:copcc-instance}, then the corresponding $\mathcal{W}$ is an "yes" instance of Eq.\eqref{eq:attack-lda-instance} and vice-versa.
\end{proof}

\section{EvaLDA: Evasion Attack towards LDA}
We can see that a key challenge of solving Eqs.\eqref{eq:obj}-\eqref{eq:constr_capacity} is to obtain the objective value $Q(\mathcal{W},\mathcal{W}')$ given an attack strategy $(\mathcal{W},\mathcal{W}')$. For any type of attack objectives in Eqs.~\eqref{eq:promote}-\eqref{eq:demote}, the key is then to compute the document-topic distributions $\theta^{vic}$ and $\theta^{adv}$ of the victim and adversarial documents. However, this is computationally expensive for CGS-based inference procedure as it involves hundreds or thousands of simulation iterations. 
To handle this challenge, we design a surrogate sampling-based inference procedure, from which we derive an analytical estimate of the document-topic distribution. 

Another key challenge of solving the formulated optimization problem, as shown in the \textit{NP-hardness} of the problem in Theorem~\ref{thm:NP-hardness}, is the high computational complexity that arises from its combinatorial nature. To scale up EvaLDA, we propose a greedy algorithm, where we assume independence of effects on the objective function $Q(\mathcal{W},\mathcal{W}')$ for different target-replacement word pairs $(w,w')$.

\subsection{Efficient Estimate of Document-Topic Distribution}
\noindent\textbf{Surrogate inference procedure.}
Similar to the original CGS-based inference, the surrogate inference works iteratively, where each iteration goes over all the positions of the test document $\mathbf{w}_m$ once. The \textit{key difference} is that the topic sampling of each position happens simultaneously in each iteration. Denote the set of unique words (i.e., vocabulary) in the test document as $\mathcal{V}$, for each unique word $v\in \mathcal{V}$, it is updated $n_v$ times, where $n_v$ is the number of times $v$ appears in the test document.\footnote{By default, we use $w$ to denote a \textit{position} word in the document, and $v$ to denote a unique word in the vocabulary. Therefore in a document, there is a one to many correspondence from $v$ to $w$.} Consequently, the calculation of topic counts happens after the simultaneous sampling step.


\begin{lemma}\label{lemma:recursive_theta}
	In the above designed surrogate inference procedure, when $\alpha\rightarrow 0$,\footnote{Be reminded that the hyper-parameter $\alpha$ of the Dirichlet distribution can be interpreted as a regularization term of the document-topic distribution $\theta_m$ based on prior knowledge. In practice, $\alpha$ is a very small value that is approximately equal to $1/K$ where $K$ is the number of topics. Therefore the assumption is reasonably made.} there exists a recursive definition of the topic distribution $\theta_k^t$ for each topic $k=1,\dots,K$:
	\begin{align}\label{eq:recursive_theta}
	\theta_{k}^t = \frac{\theta_k^{t-1}}{N} \sum_{v\in \mathcal{V}}\!n_v \varphi_{kv},
	\end{align} where $t$ is the number of iterations in the CGS procedure, and $N=|\mathbf{w}_m|$ is the number of words in the test document $\mathbf{w}_m$.
\end{lemma}	
\begin{proof}
	At iteration $t$, denote the full conditional probability of sampling a topic $k$ for word $v$ as $p_{kv}^t$, then $p_{kv}^t$ is the same at each of the $n_v$ sampling operations:
	\begin{align*}
	p_{kv}^t=\varphi_{kv} \cdot \frac{N_{k}^{t-1}+\alpha}{N^{t-1}+K\alpha}
	\end{align*}
	Here since only the test document $\mathbf{w}_m$ is involved, we have omitted the sub-script $m$ for clarity of notation. 
	
	Because the surrogate inference procedure goes over the entire document at each iteration $t$, the sum of topic count always equals the total number of words $N$ in the test document. When $\alpha\rightarrow 0$ and $K\alpha \rightarrow 1$, the above equation is re-written as:
	\begin{align*}
	p_{kv}^t=\varphi_{kv} \cdot \frac{N_{k}^{t-1}}{N}
	\end{align*}
	The approximation on the denominator holds as $N\gg 1$.

	When sampling repeats $n_v$ times, the \textit{expected} count of topic $k$ being sampled at word $v$ is $N_{kv}^t=n_v p_{kv}^t$.
	Note that we have omitted the expectation symbol for clarify of notation. Thus, the total \textit{expected} count of topic $k$ for the test document is:
	\begin{align*}
	N_{k}^t=\sum_{v\in \mathcal{V}}n_v p_{kv}^t
	\end{align*}
	Combining the above two equations,
	\begin{align*}
	N_{k}^t=\!\sum_{v\in \mathcal{V}}\!n_v \varphi_{kv}  \frac{N_{k}^{t-1}}{N}\! =\! \frac{N_{k}^{t-1}}{N} \!\sum_{v\in \mathcal{V}}\!n_v \varphi_{kv}   
	\end{align*}
	The second equation holds because $\frac{N_{k}^{t-1}}{N}$ does not depend on $v$.
	According to Eq.(2) in the main text,
	\begin{align*}
	\theta_k = \frac{N_k^t+\alpha}{N + K\alpha} \rightarrow \frac{N_k^t}{N}
	\end{align*} when $\alpha \rightarrow 0$. We can thus derive the recursive equation in Eq.\eqref{eq:recursive_theta}.
\end{proof}

From Lemma~\ref{lemma:recursive_theta}, we can derive the following theorem:
\begin{theorem}
When $\alpha \rightarrow 0$ and the document-topic distribution is initialized as a discrete uniform distribution, i.e., for each $k=1,\ldots,K$, there is $\theta_k^0 = 1/K$, then
\begin{align}\label{eq:analytic_estimate}
\theta_{k}^t = \frac{1}{K}\cdot \Big(\frac{\sum_{v\in \mathcal{V}}\!n_v \varphi_{kv}}{N}\Big)^t,\  \forall t\geq 0
\end{align}
\end{theorem}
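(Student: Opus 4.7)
The plan is a direct induction on the iteration index $t$, with Lemma~\ref{lemma:recursive_theta} doing all the real work and the uniform initialization providing the base case. Since the recursion in Lemma~\ref{lemma:recursive_theta} multiplies $\theta_k^{t-1}$ by a factor that does not depend on $t$, the closed form should fall out by simply unrolling that recursion $t$ times starting from $\theta_k^0 = 1/K$.

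For the base case $t=0$, the right-hand side reduces to $\frac{1}{K}\cdot 1 = \frac{1}{K}$ (any quantity raised to the zeroth power equals $1$), which matches the assumed initialization $\theta_k^0 = 1/K$. For the inductive step, I would assume the claimed closed form at iteration $t-1$ and substitute it into the recursion from Lemma~\ref{lemma:recursive_theta}, obtaining
\begin{align*}
\theta_k^t
&= \frac{\theta_k^{t-1}}{N}\sum_{v\in\mathcal{V}} n_v\varphi_{kv} \\
&= \frac{1}{K}\Big(\frac{\sum_{v\in\mathcal{V}} n_v\varphi_{kv}}{N}\Big)^{t-1}\cdot \frac{\sum_{v\in\mathcal{V}} n_v\varphi_{kv}}{N} \\
&= \frac{1}{K}\Big(\frac{\sum_{v\in\mathcal{V}} n_v\varphi_{kv}}{N}\Big)^t,
\end{align*}
which closes the induction.

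I do not anticipate a genuine obstacle here, since essentially all of the analytical content --- the simultaneous-sampling expectation, and the $\alpha\to 0$ together with $N\gg 1$ simplifications --- has already been absorbed into Lemma~\ref{lemma:recursive_theta}. The one thing worth being explicit about is that the multiplicative factor $\frac{1}{N}\sum_{v\in\mathcal{V}} n_v\varphi_{kv}$ is genuinely constant across iterations: it depends only on the fixed topic-word distribution $\varphi$ (which is frozen at inference time) and on the fixed word-count vector $(n_v)_{v\in\mathcal{V}}$ of the test document, so pulling it outside the recursion and raising it to the $t$-th power is legitimate. With this observation, the theorem reduces to a routine corollary of Lemma~\ref{lemma:recursive_theta} obtained by unrolling from the uniform initial condition.
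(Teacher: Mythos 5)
Your proposal is correct and follows exactly the route the paper indicates: the paper states the theorem "can be easily proved by induction from $t=0$ using Lemma~\ref{lemma:recursive_theta}" and omits the details, which are precisely the base case and unrolling step you wrote out. Your added remark that the multiplicative factor $\frac{1}{N}\sum_{v\in\mathcal{V}} n_v\varphi_{kv}$ is constant across iterations is a worthwhile explicit justification that the paper leaves implicit.
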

The theorem can be easily proved by induction from $t=0$ using Lemma~\ref{lemma:recursive_theta} and is therefore omitted.
In practical implementation, different $t$ values (i.e., different levels of approximation) are tested. 
With the analytical estimate of document-topic distribution, we can efficiently compute any form of attack objective value $Q(w,w')$ in Eqs\eqref{eq:promote}-\eqref{eq:demote} for a target-replacement word pair $(w,w')$.

\subsection{Attack Strategy Design}\label{sec:attack_design}
Even with an efficient estimate of the attack objective given an attack strategy, the optimization problem in Eqs.\eqref{eq:obj}-\eqref{eq:constr_capacity} is essentially combinatorial, which is proven to be \textit{NP-hard}. To scale up the attack, we design a greedy algorithm which assumes the marginal contribution $Q(w,w')$ to the attack objective of different target-replacement word pairs $(w,w')$ is independent from each other:
\begin{align}
Q(\mathcal{W},\mathcal{W}')=\sum_{w\in \mathcal{W}}Q(w,w')
\end{align}
	
With this assumption, the problem is then to find the top ranked set of target-replacement word pairs which have the highest marginal contribution to the attack objective, subject to the two constraints in Eqs.\eqref{eq:constr_dist}-\eqref{eq:constr_capacity}. Algorithm~\ref{alg:evalda} shows the detailed description of EvaLDA. 

\noindent\textbf{Step 1: Get feasible set of target words.} It starts by getting the feasible set of target words $\mathcal{W}^f$ for the victim document $\mathbf{w}^{vic}$ (Line~\ref{line:get_vf}). This is done by removing unimportant words such as stop-words. With this step, the attack strategy space is sufficiently reduced with little sacrifice on the effectiveness of the attack. Simultaneously, we can get a feasible vocabulary $\mathcal{V}^f$ that corresponds to $\mathcal{W}^f$.

\begin{algorithm2e}[ht]
\DontPrintSemicolon
\SetAlgoNoEnd
\SetInd{0.5em}{0.5em}
\caption{EvaLDA} \label{alg:evalda}
\KwIn{Victim document $\mathbf{w}^{vic}$,\\ Topic-word distribution $\varphi$ of LDA model,\\ Attack type in Eqs.\eqref{eq:promote}\eqref{eq:demote},\\ 
Approximation level $t$ in Eq.\eqref{eq:analytic_estimate}, \\
Word distance threshold $\sigma$,\\ 
Modification budget $\kappa$.}
\KwOut{Adversarial document $\mathbf{w}^{adv}$} \hrule

Get feasible target word set $\mathcal{W}^f$ and vocabulary $\mathcal{V}^f$\label{line:get_vf}
		
Get candidate replacement $\mathcal{R}(v),\ \forall v\in \mathcal{V}^f$ \label{line:get_Rv}
		
\For{$w\in \mathcal{W}^f$}{\label{line:for_start}
	$Q^*(w)\leftarrow 0$
			
	\For{$w'\in \mathcal{R}(w)$}{
				
		Compute $Q(w,w')$ according to Eq.\eqref{eq:analytic_estimate} and attack type
				
		\If{$Q(w,w')>Q^*(w)$} {$Q^*(w)\leftarrow Q(w,w')$
					
			$u(w)\leftarrow w'$
		}
	}
}
$\mathcal{W}^f_{sort} \leftarrow$ Sort $\mathcal{W}^f$ by $Q^*(w)$ \label{line:sort}
		
$\mathcal{W}^*\leftarrow$ first word of $\mathcal{W}^f_{sort}$ \label{line:generate_start}
		
		
\While{$|\mathcal{W}^*|<|\mathbf{w}^{vic}|\cdot \kappa$}{\label{line:while_start}
			
	$\mathcal{W}\leftarrow \mathcal{W}^*$
			
	$w \leftarrow$ next word of $\mathcal{W}^f_{sort}$
			
	$\mathcal{W}^*\leftarrow \mathcal{W}^*\cup \{w\}$ \label{line:while_end}
			
}
		
$\mathbf{w}^{adv}\leftarrow$ replace words $w \in\mathcal{W}$ with $u(w)$ in $\mathbf{w}^{vic}$ \label{line:generate_end}
		
\Return $\mathbf{w}^{adv}$
\end{algorithm2e}

\noindent\textbf{Step 2: Get candidate replacement set (Line~\ref{line:get_Rv}).} 
In addition to finding the top-ranked target-replacement word pairs so as to maximally deteriorate the model's performance, another critical aspect of a successful evasion attack is to make the perturbations evasive. Therefore, it is important to find a candidate set of replacements $\mathcal{R}(v)$ for each unique word $v\in\mathcal{V}^f$ that are ``close'' to $v$.
In implementation, we consider two ways of building the candidate replacement set, which corresponds to two different notions of ``closeness'' of words. The first way is to find the set of synonyms $\mathcal{S}_v$ for each word $v\in \mathcal{V}^f$ (e.g., using WordNet\footnote{\url{http://wordnet.princeton.edu/}}). Another way is to measure word ``closeness'' in the word embedding space, e.g.~\citep{bojanowski2017enriching}. More specifically, we use the cosine distance $1-\text{cos}(v,v')$ as the distance measure of a word pair $(v,v')$ in the embedding space and add words whose cosine distance w.r.t. the target word is smaller than a threshold $\sigma$. This is done similarly as in previous works, e.g.~\cite{devlin2019bert,abdibayev2021using}.
	
\noindent\textbf{Step 3: Sort target words $w\in \mathcal{W}^f$ by their marginal contributions to the objective ( Lines~\ref{line:for_start}-\ref{line:sort}).} 
This step contains two loops. The outer loop iterates over each target word $w$ in $\mathcal{W}^f$, while for each $w$, the inner loop iterates over all the possible replacement word $w'$ in $\mathcal{R}(w)$. Here $\mathcal{R}(w)$ is obtained by mapping $w$ to the unique word $v$ in the vocabulary. The inner loop finds the best replacement word that has the largest marginal contribution $Q(w,w'))$ to the objective. 
For each target word $w\in \mathcal{W}^f$, its best replacement word and the corresponding largest marginal contribution are respectively denoted as $u(w)$ and $Q^*(w)$, where $Q^*(w)=\max_{w'\in \mathcal{R}(w)} Q(w,w')$. After the two loops, $\mathcal{W}^f$ is sorted w.r.t. $Q^*(w)$ to obtain the top-ranked target-replacement word pairs. The sorted \textit{list} of words is represented as $\mathcal{W}_{sort}^f$.

\noindent\textbf{Step 4: Generate adversarial sample (Lines~\ref{line:generate_start}-\ref{line:generate_end}).} 
This step generates the attack strategy and the adversarial document. It initializes $\mathcal{W}^*$ with the first word in the sorted list $\mathcal{W}_{sort}^f$. In the while loop (Lines~\ref{line:while_start}-\ref{line:while_end}), it checks whether the current set of target words $\mathcal{W}^*$ exceed the budget constraint in Eq.\eqref{eq:constr_capacity}. It then repeatedly adds word from $\mathcal{W}_{sort}^f$ to $\mathcal{W}^*$ until the condition is violated. Note that because $\mathcal{W}$ is a backup of $\mathcal{W}^*$ latent by one step, it is the optimal target word set after the while loop. The algorithm finally replaces each word $w$ in $\mathcal{W}$ with $u(w)$ to obtain the adversarial document.

\begin{table}[ht]
	\centering\scriptsize 
	\setlength{\tabcolsep}{3.5pt}
    \caption{Statistics of datasets.}\label{tab:dataset_stats}
	\begin{tabular}{cccccc}
	\hline
	Dataset& \#Train docs & \#Test docs& \#Train words& Vocab size &Avg test doc length\\
	\hline
	NIPS& 6,562 & 679 & 9,731,300 & 27,176 & 3,211\\
	AP& 1,571 & 675 & 304,357 & 10,473 & 192\\
	\hline
	\end{tabular}
\end{table}

\section{Experimental Evaluations}
We conduct empirical experiments to evaluate EvaLDA. 

\noindent\textbf{Datasets}
We evaluate EvaLDA on 2 different datasets, namely NIPS\footnote{\url{https://www.kaggle.com/benhamner/nips-papers}} and AP\footnote{\url{https://github.com/Blei-Lab/lda-c/blob/master/example/ap.tgz}}. Statistics of the datasets are shown in Table~\ref{tab:dataset_stats}.

\noindent\textbf{Models \& hyperparameters}
We implement LDA-CGS using the lda package\footnote{\url{https://github.com/lda-project/lda}}. The hyperparameters of the two datasets are set as follows: the topic number is $120$ for NIPS dataset, and $75$ for AP dataset. The training iteration is $5,000$ which is enough to converge. We set the hyperparameters $\alpha$ and $\eta$ of the Dirichlet distribution as default values $0.1$ and $0.01$. Each test sample runs $500$ iterations. All experiments are run in machines with Intel E5-2678 v3 and 100GB RAM. Parameters about the EvaLDA algorithm are described after the Evaluation Metrics paragraph.

\noindent\textbf{Baselines}
Because this is the first work that studies evasion attack to LDA models, there are no prior works that are directly applicable to our task. In order to evaluate the performance of EvaLDA, we compare it with 4 reasonable heuristic baselines. \textbf{B1} selects target-replacement word pairs completely randomly. \textbf{B2} randomly selects the target words and selects replacements as the nearest neighbor in the word vector space. \textbf{B3} selects the top-k words related to the target topic while selecting replacements randomly. \textbf{B4} selects the top-k words related to the target topic and selects replacements as the nearest neighbor in the word vector space.

\begin{figure*}[t]
	\centering
	\subfigure{
		\begin{minipage}[t]{ 0.48\columnwidth}
			\centering
		\includegraphics[width=\textwidth]{./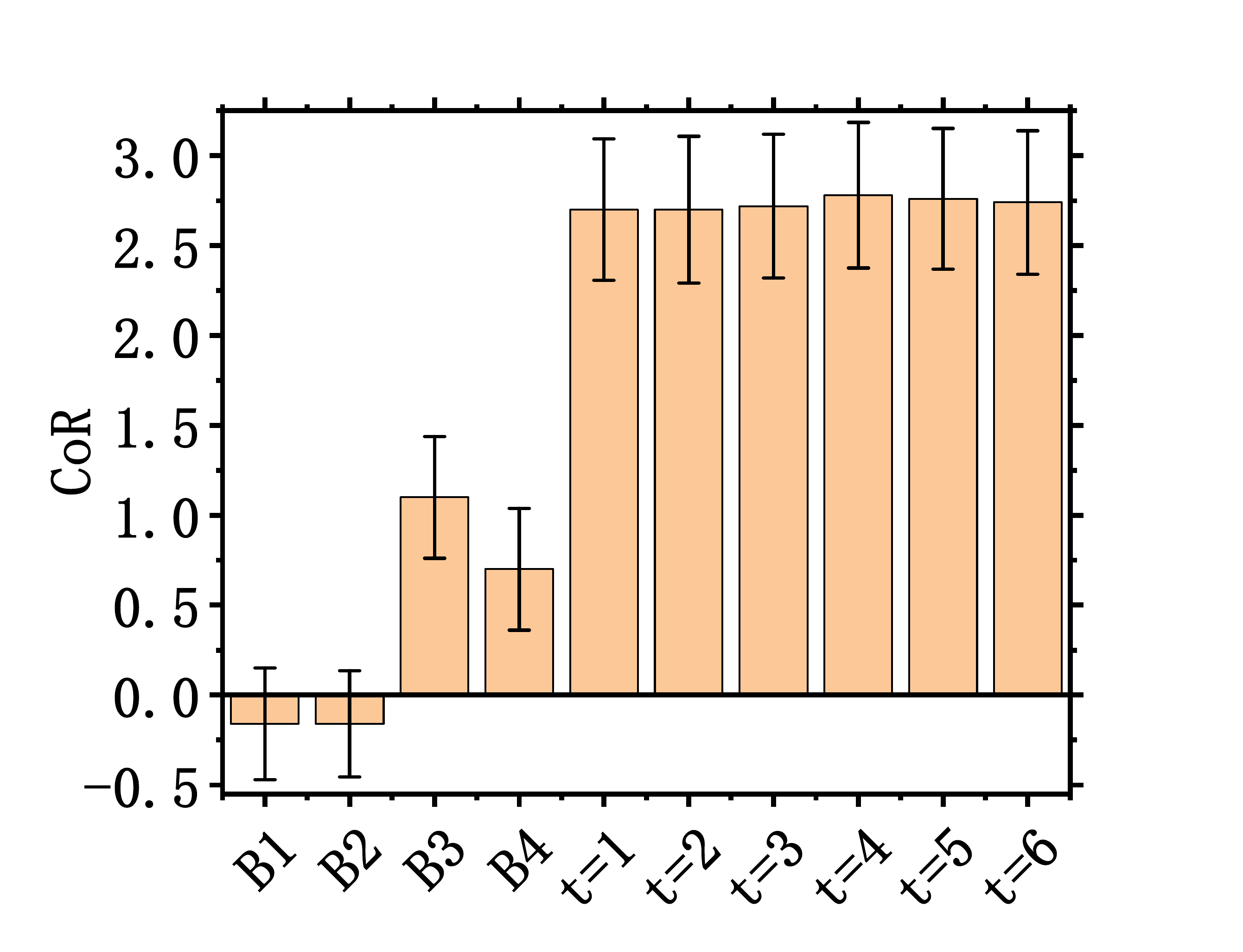}
		\end{minipage}%
	}%
	\subfigure{
		\begin{minipage}[t]{ 0.48\columnwidth}
			\centering			\includegraphics[width=\textwidth]{./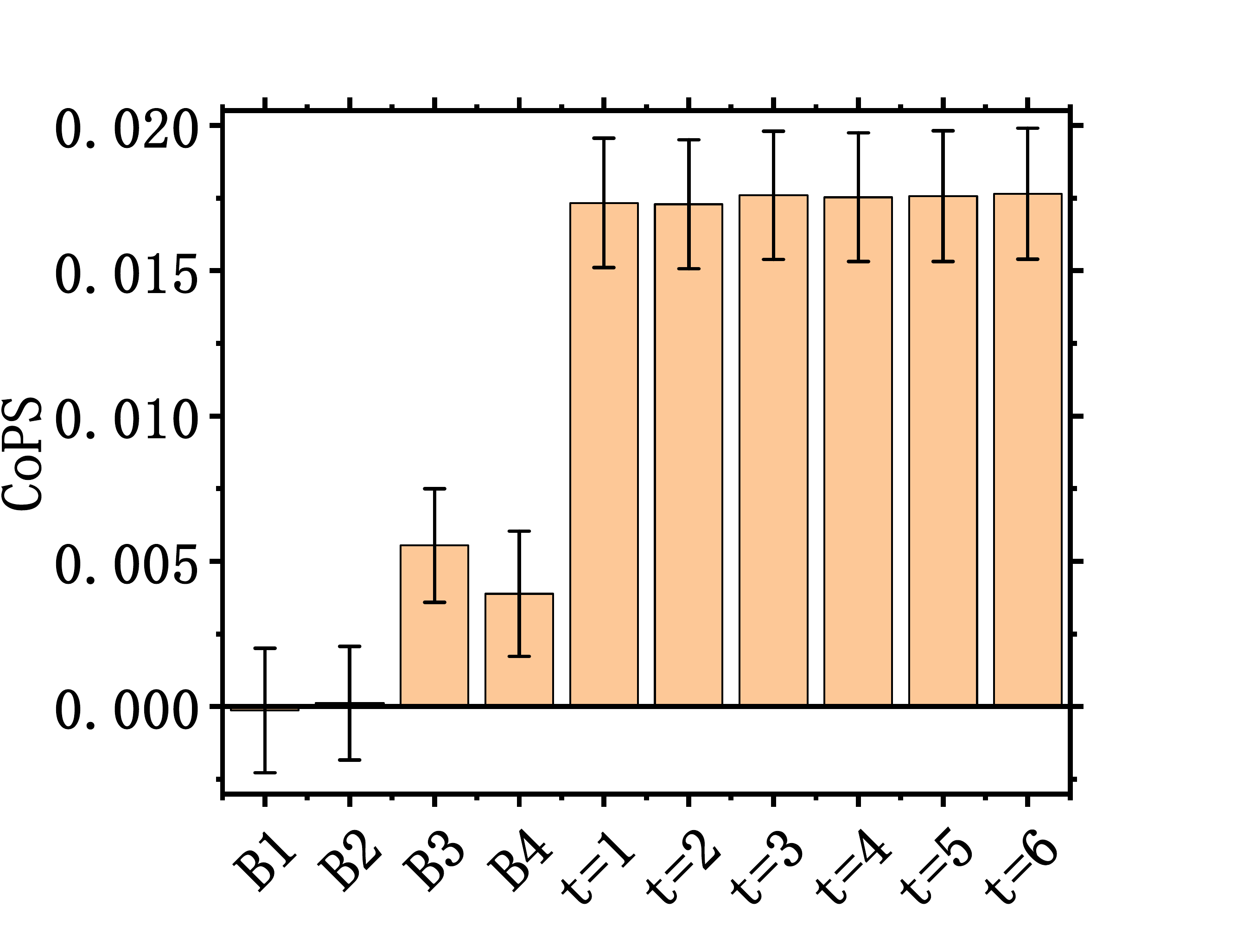}
		\end{minipage}%
	}%
	\subfigure{
	\begin{minipage}[t]{ 0.48\columnwidth}
			\centering
			\includegraphics[width=\textwidth]{./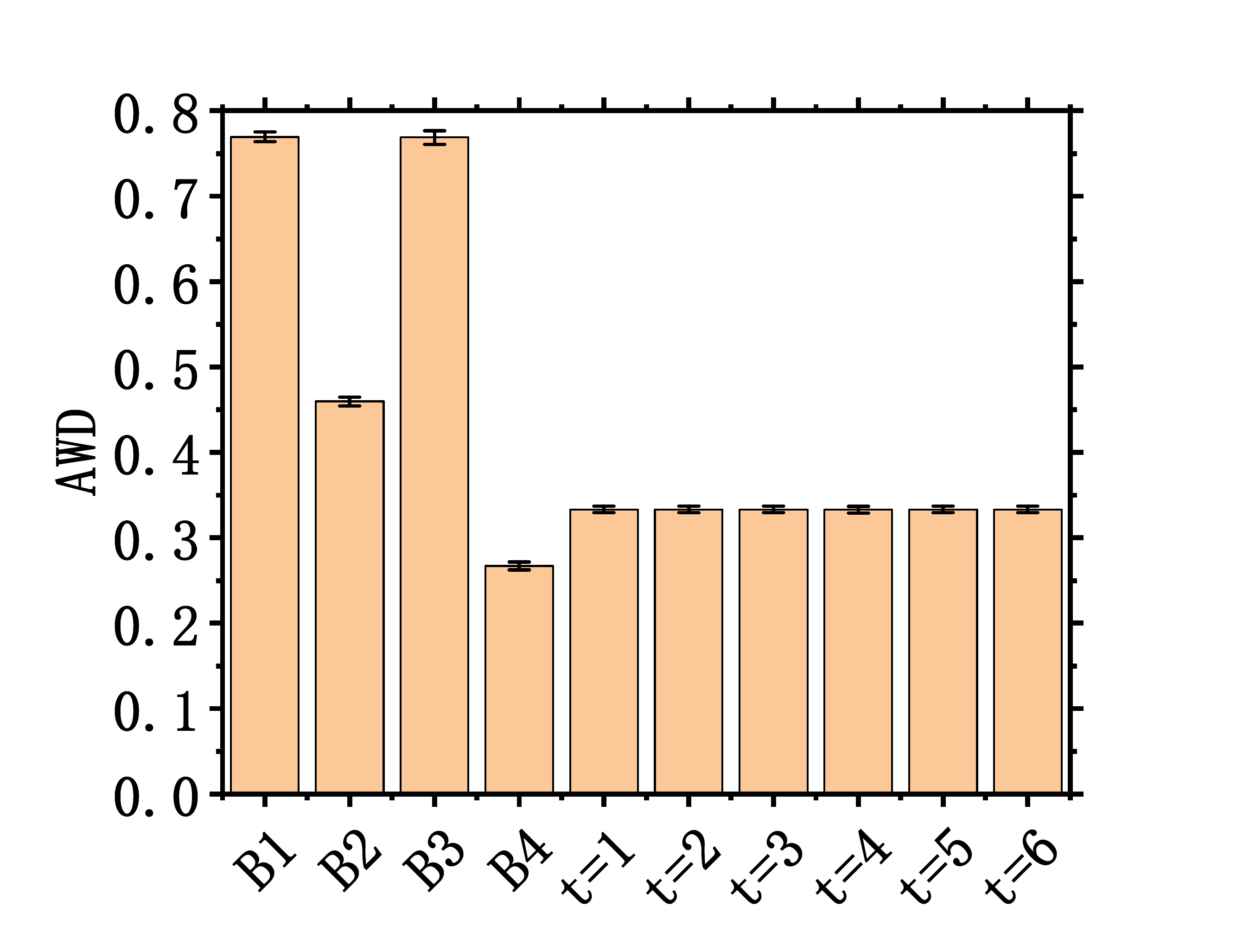}
		\end{minipage}
	}%
	\subfigure{
		\begin{minipage}[t]{0.48\columnwidth}
			\centering
			\includegraphics[width=\textwidth]{./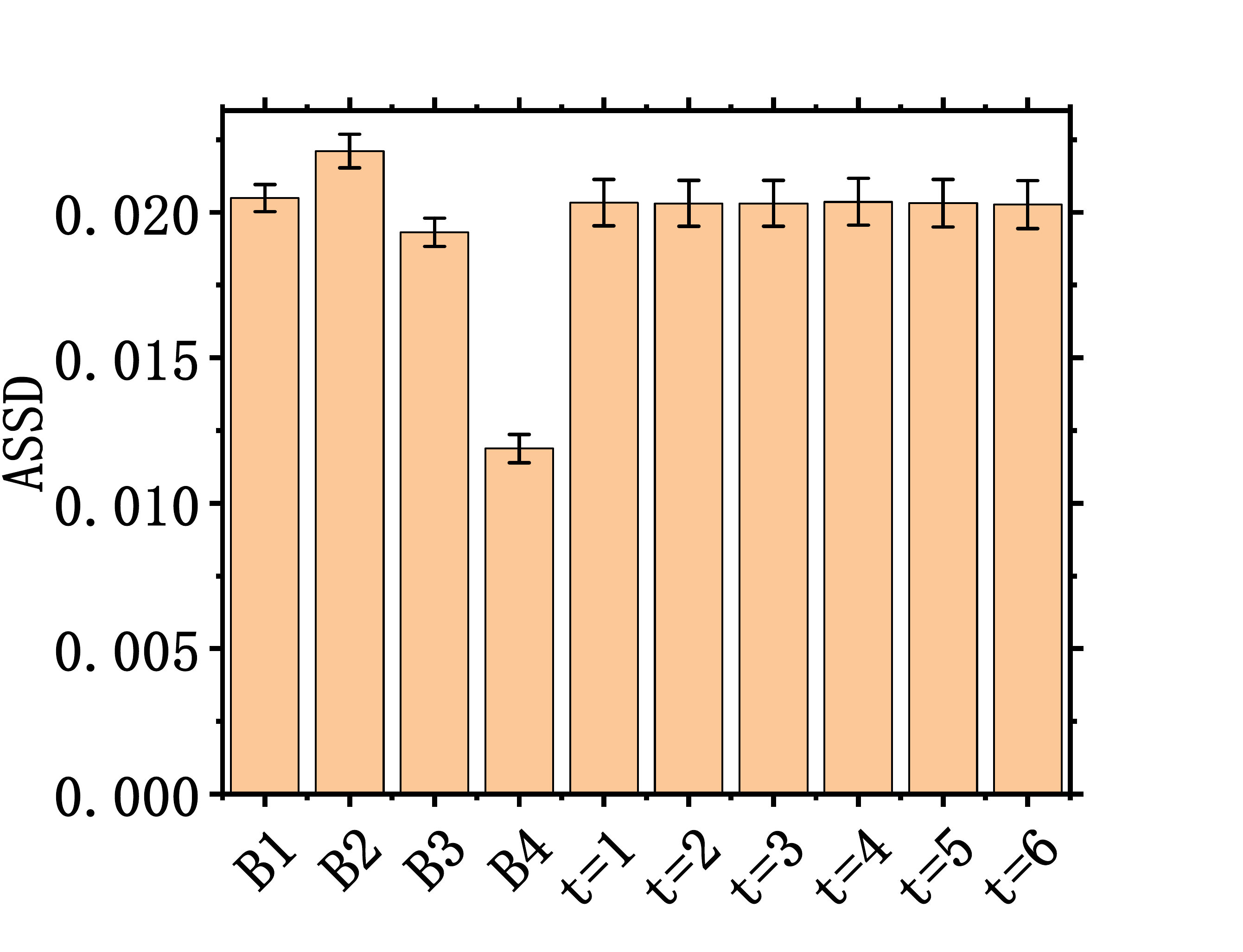}
		\end{minipage}
	}%
	
	\centering
	\vspace{-1mm}
	\caption{Promotion attack with varying approximate levels $t$, on the NIPS dataset, showing 95\% confidence interval.}
	\label{fig:nips_diff_level}
	\vspace{-1mm}
\end{figure*}

\begin{figure*}
	\centering
	\subfigure{
		\begin{minipage}[t]{0.48\columnwidth}
			\centering
			\includegraphics[width=\textwidth]{./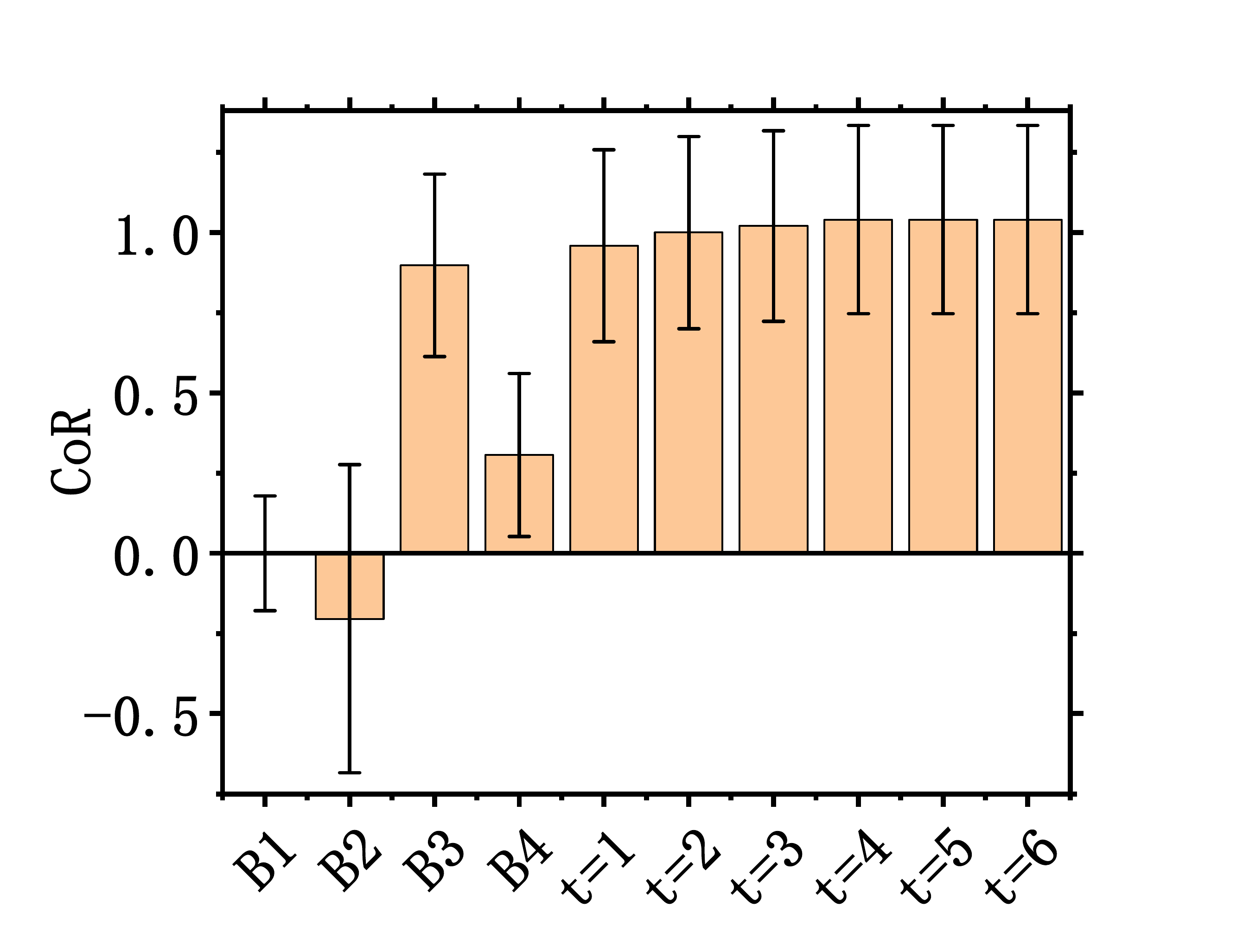}
		\end{minipage}%
	}%
	\subfigure{
		\begin{minipage}[t]{0.48\columnwidth}
			\centering
			\includegraphics[width=\textwidth]{./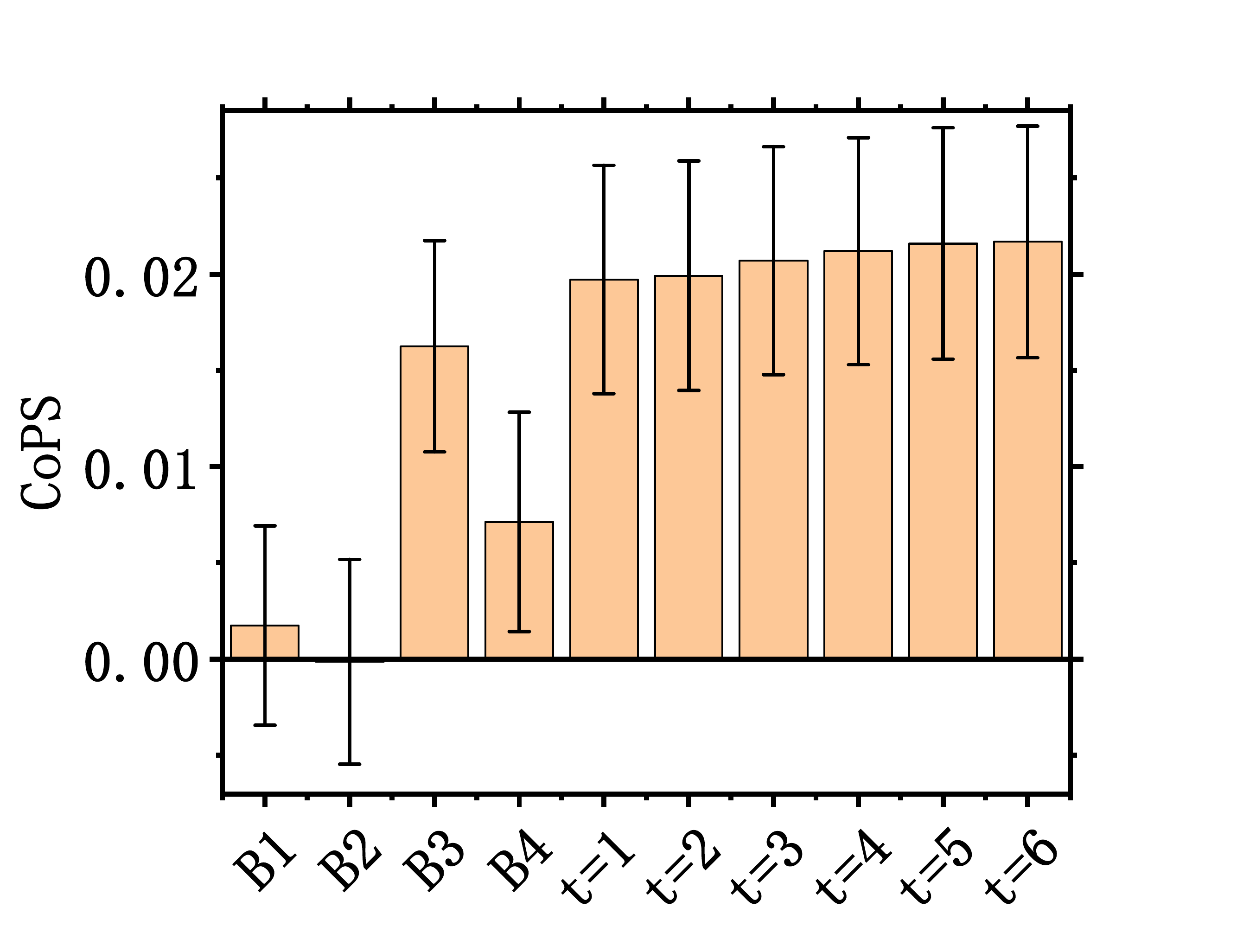}
		\end{minipage}%
	}%
	\subfigure{
		\begin{minipage}[t]{0.48\columnwidth}
			\centering
			\includegraphics[width=\textwidth]{./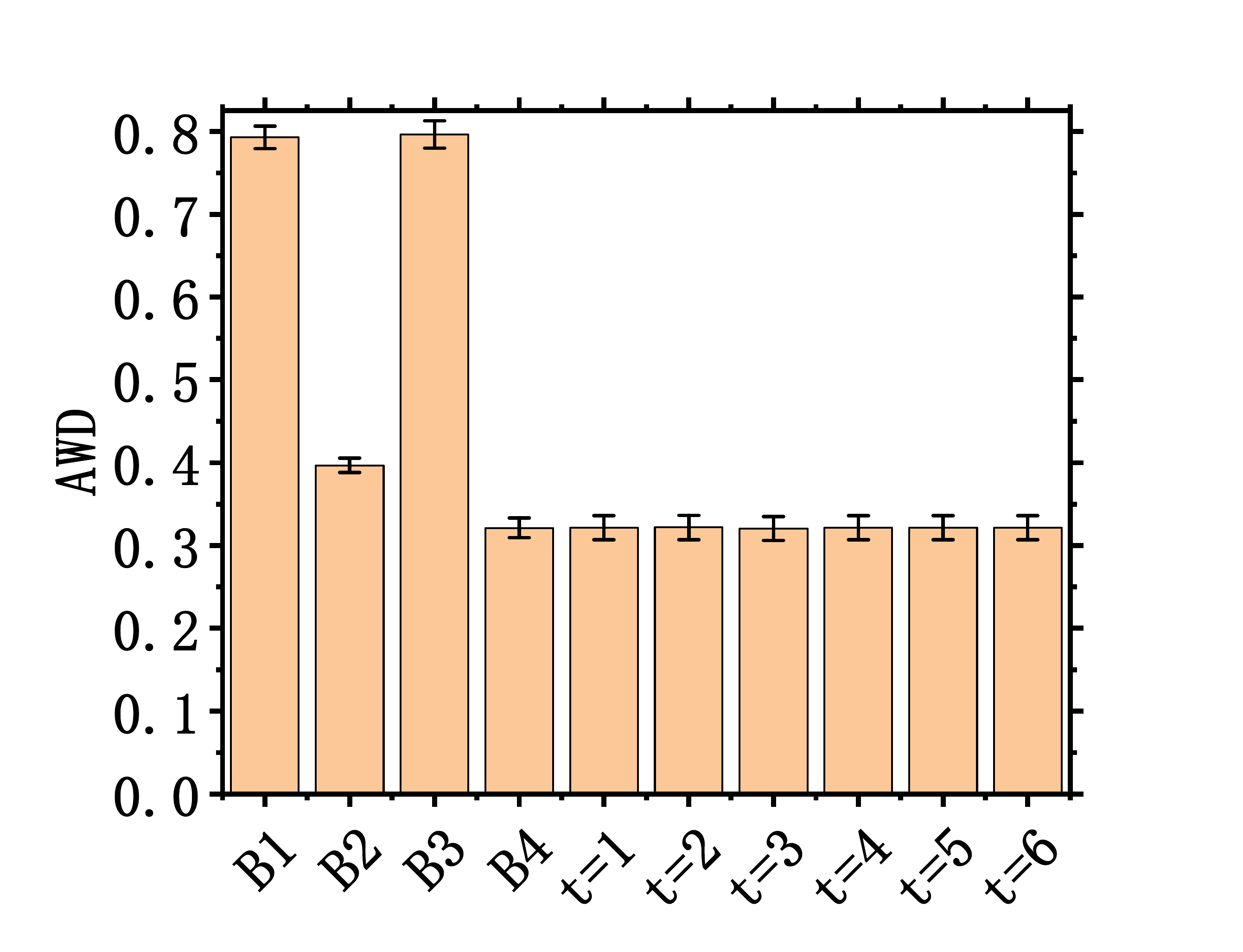}
		\end{minipage}
	}%
	\subfigure{
		\begin{minipage}[t]{0.48\columnwidth}
			\centering
			\includegraphics[width=\textwidth]{./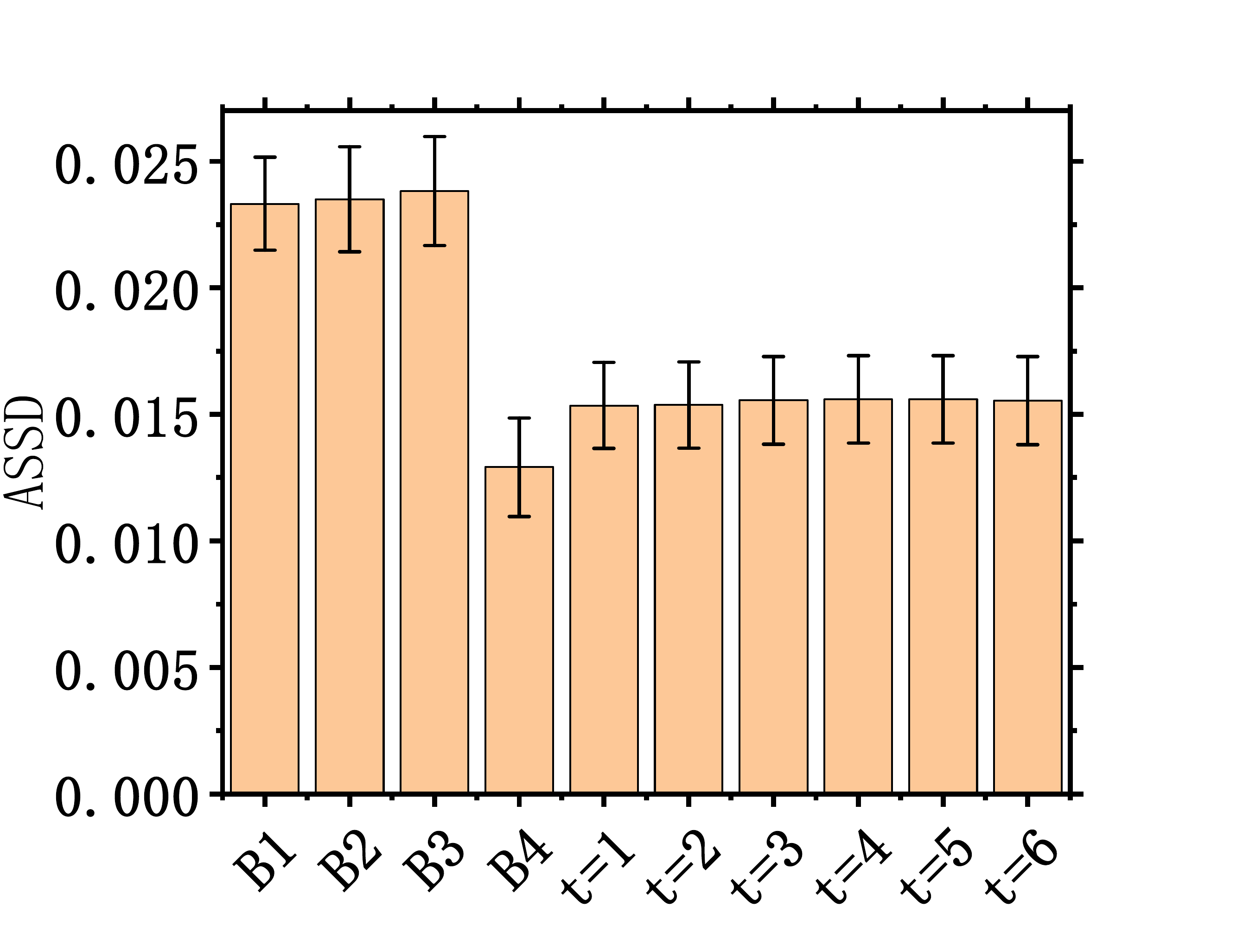}
		\end{minipage}
	}%
	
	\centering
	\caption{Promotion attack with varying approximate levels, on the AP dataset, showing 95\% confidence interval. }
	\label{ldac_diff_level}
\end{figure*}

\begin{figure*}
         \centering
         \subfigure{
               \begin{minipage}[t]{ 0.45\columnwidth}
               \centering
               \includegraphics[width=\textwidth]{./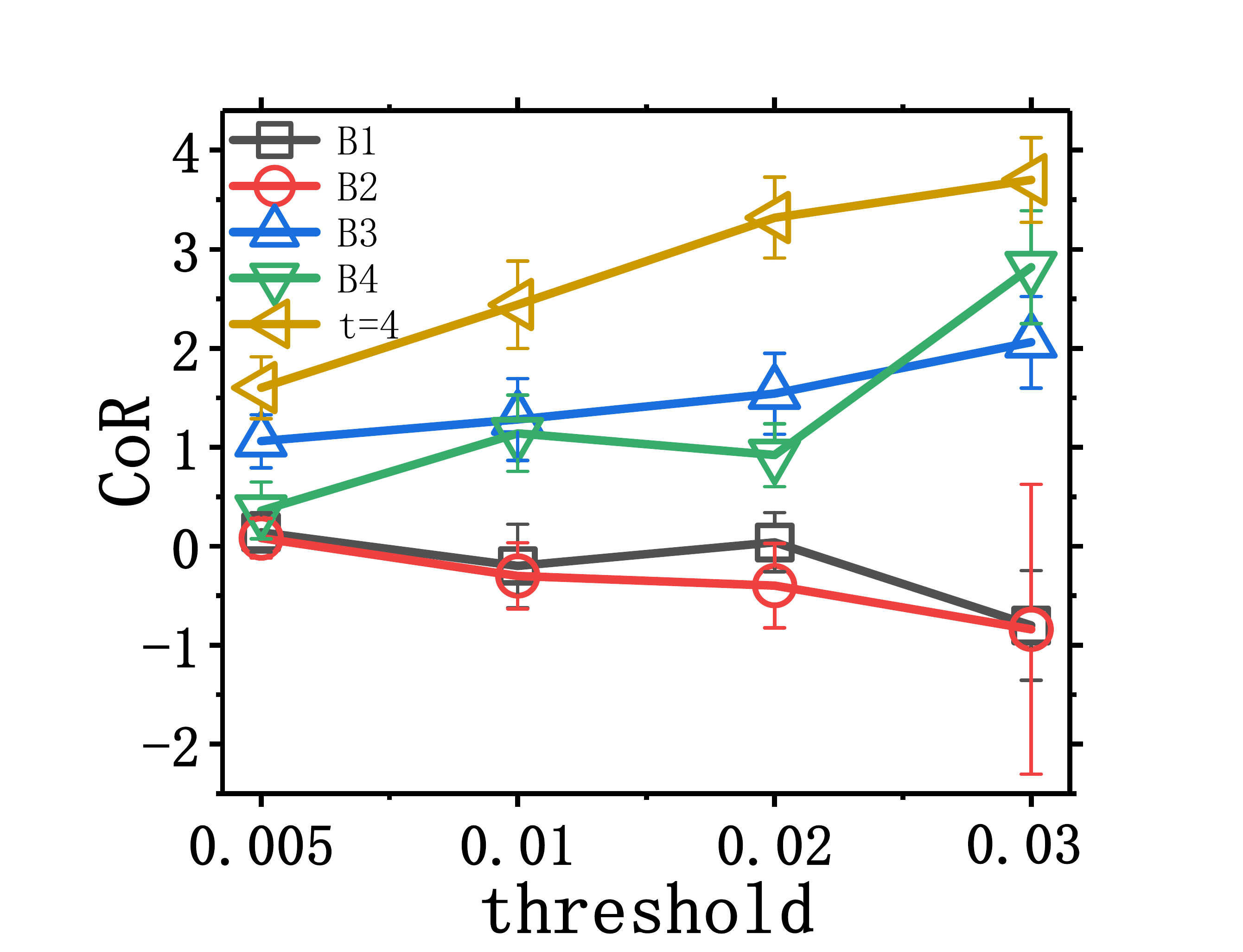}
               \end{minipage}%
           }%
           \subfigure{
                \begin{minipage}[t]{ 0.45\columnwidth}
                     \centering
                     \includegraphics[width=\textwidth]{./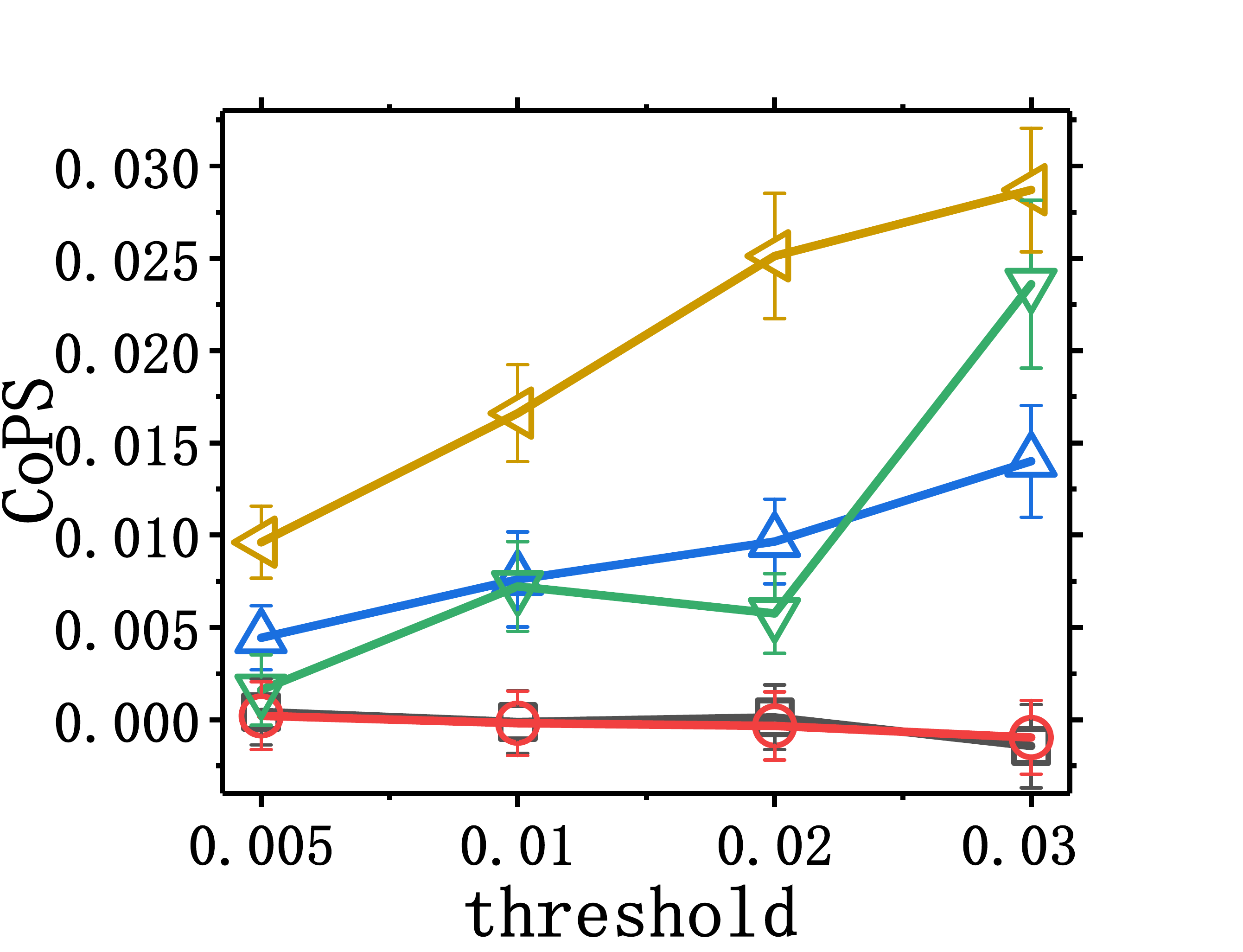}
                \end{minipage}%
           }%
           \subfigure{
                \begin{minipage}[t]{ 0.45\columnwidth}
                     \centering
                     \includegraphics[width=\textwidth]{./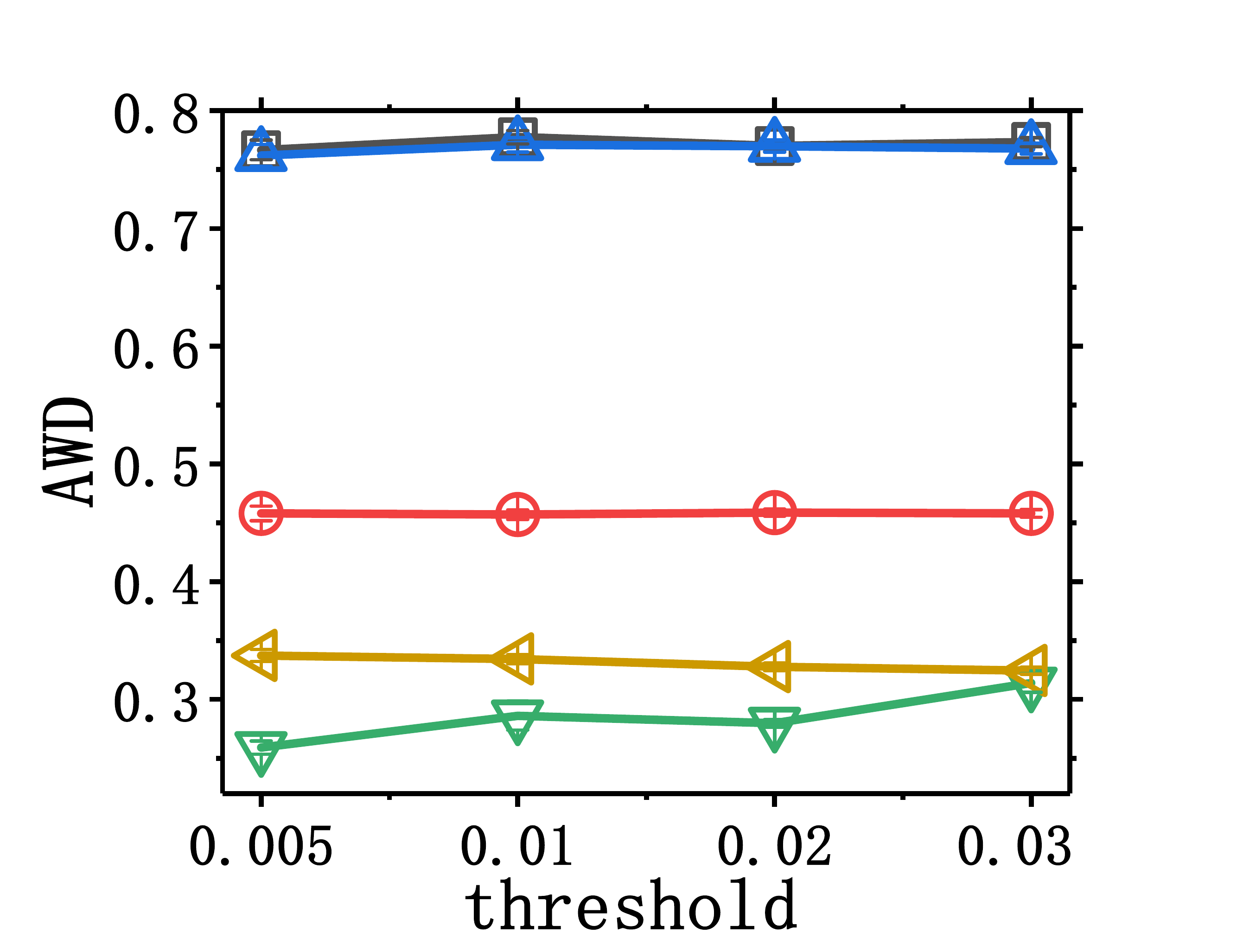}
                \end{minipage}
           }%
		\subfigure{
			\begin{minipage}[t]{ 0.45\columnwidth}
				\centering
				\includegraphics[width=\textwidth]{./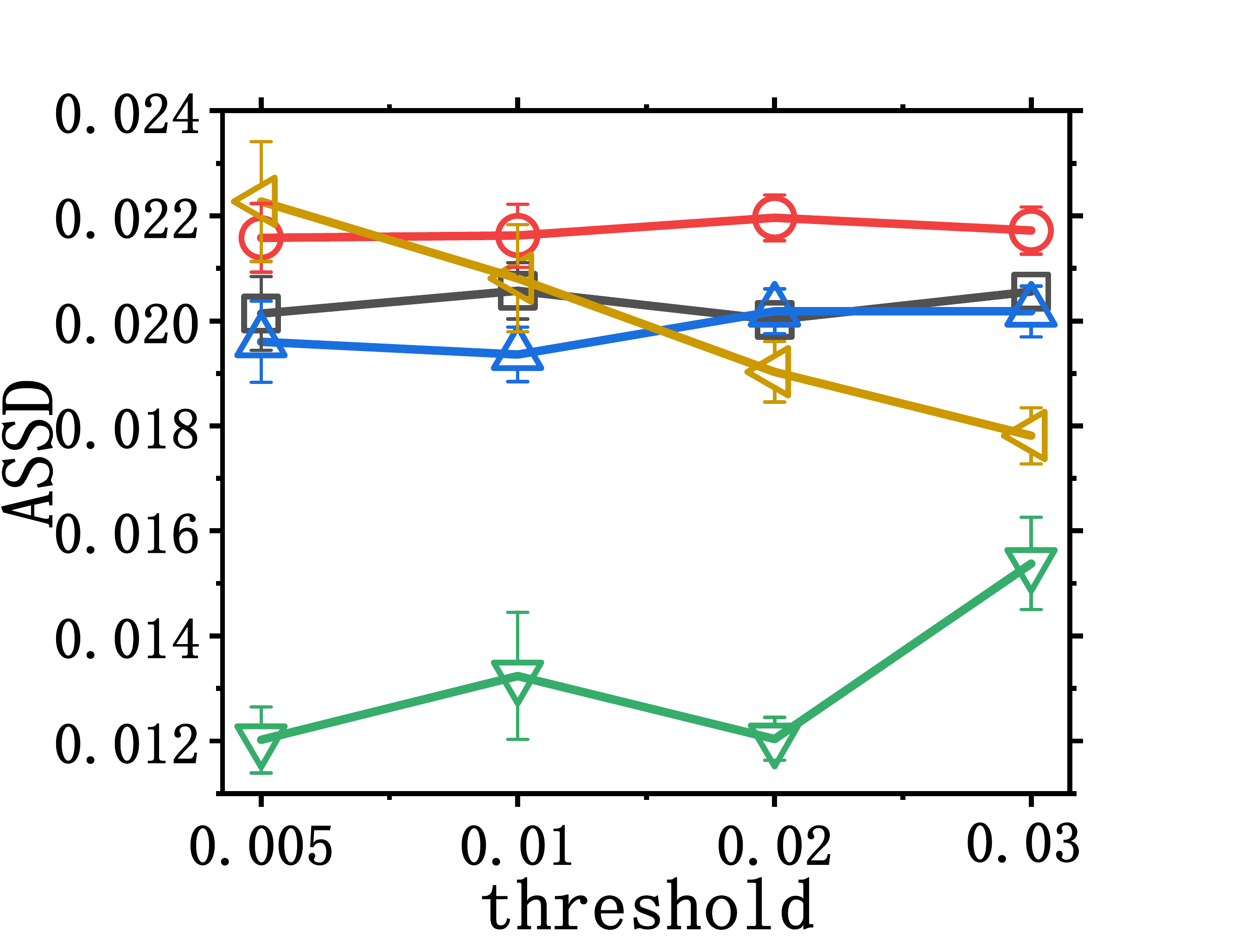}
			\end{minipage}
		}%
		\centering
		\vspace{-2mm}
		\caption{Promotion attack with varying perturbation threshold, on the NIPS dataset, showing 95\% confidence interval.}
		\label{nips_diff_thre}
		\vspace{-2mm}
\end{figure*}

\begin{figure*}
	\centering
	\subfigure{
		\begin{minipage}[t]{0.45\columnwidth}
			\centering
			\includegraphics[width=\textwidth]{./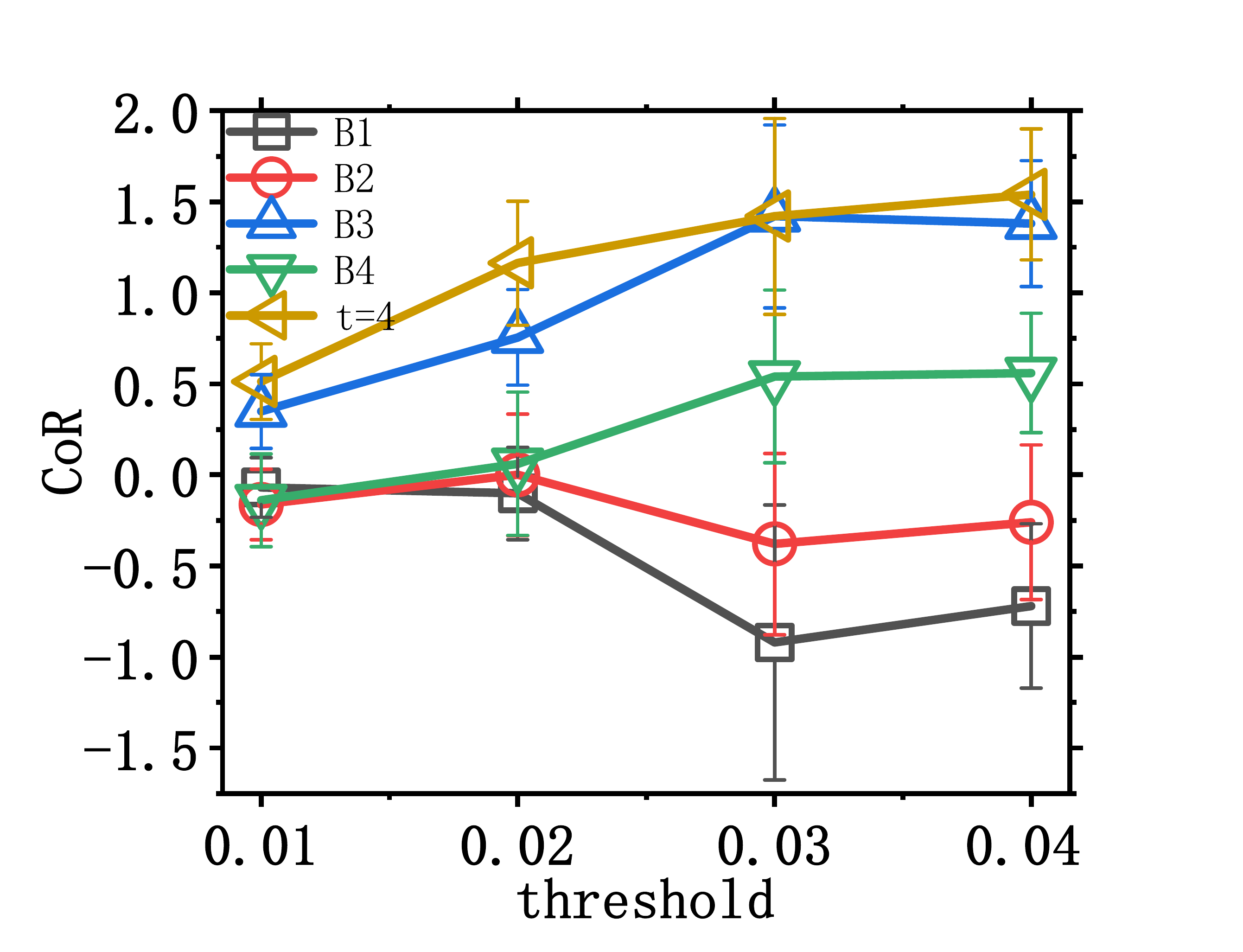}
		\end{minipage}%
	}%
	\subfigure{
		\begin{minipage}[t]{0.45\columnwidth}
			\centering
			\includegraphics[width=\textwidth]{./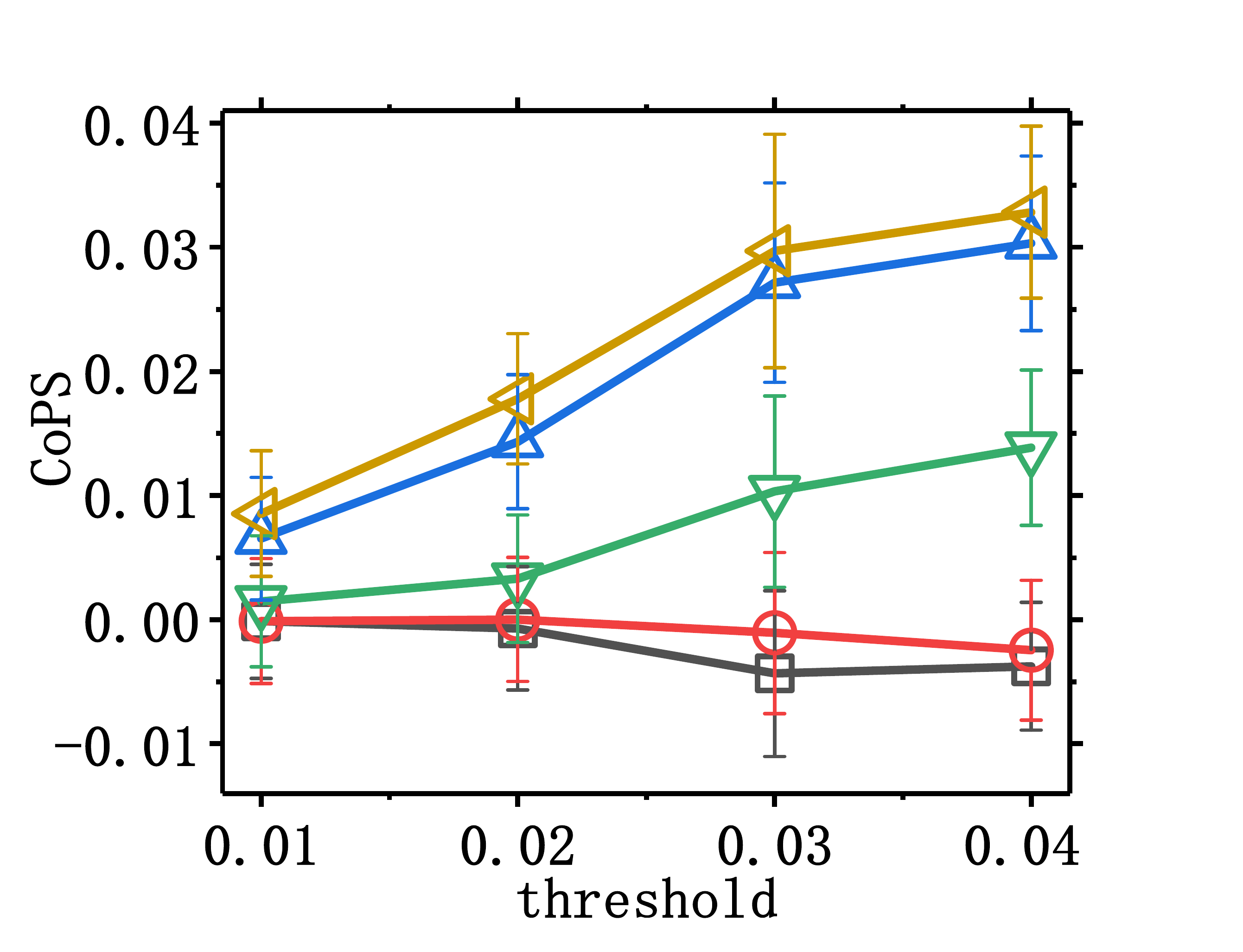}
		\end{minipage}%
	}%
	\subfigure{
		\begin{minipage}[t]{0.45\columnwidth}
			\centering
			\includegraphics[width=\textwidth]{./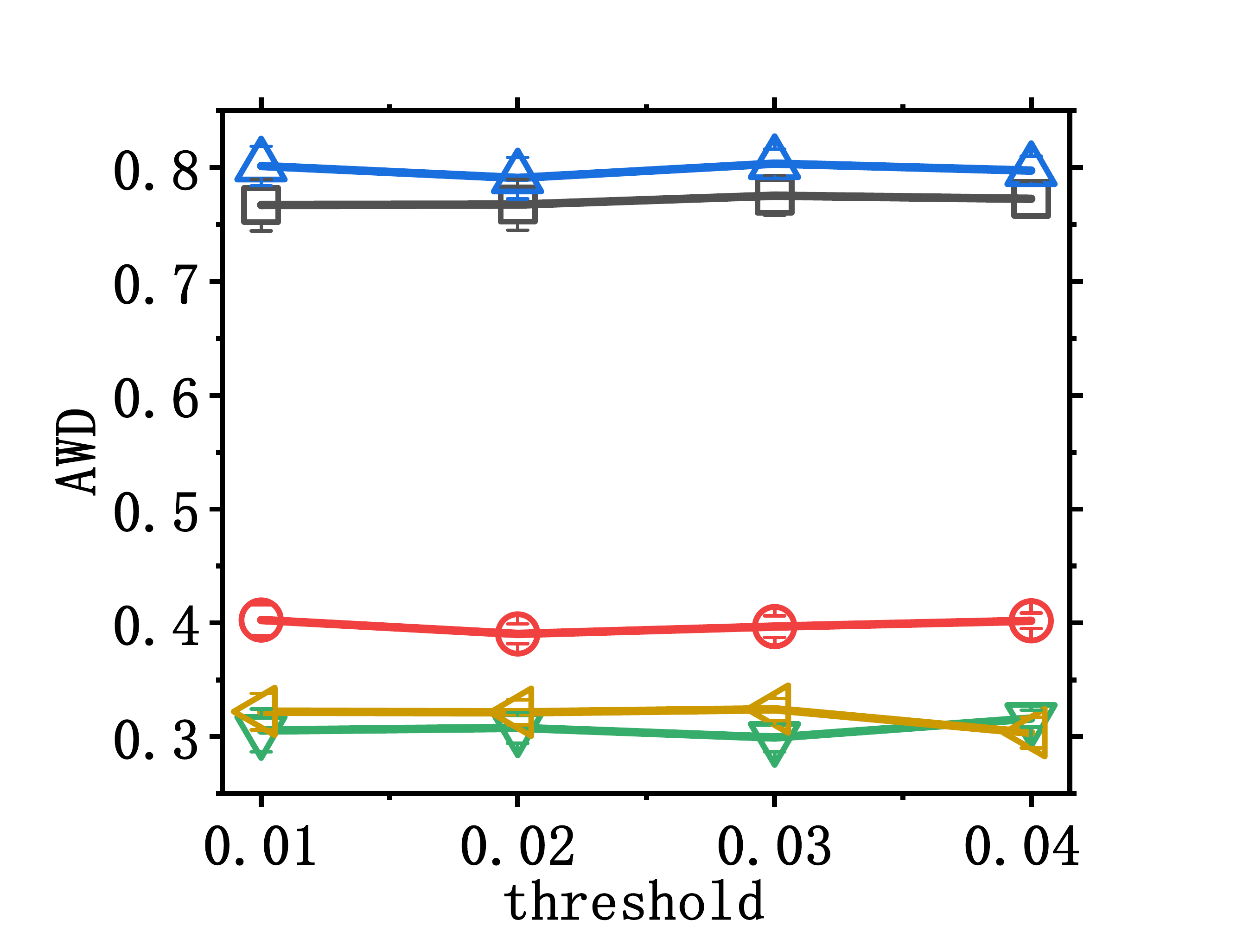}
		\end{minipage}
	}%
	\subfigure{
		\begin{minipage}[t]{0.45\columnwidth}
			\centering
			\includegraphics[width=\textwidth]{./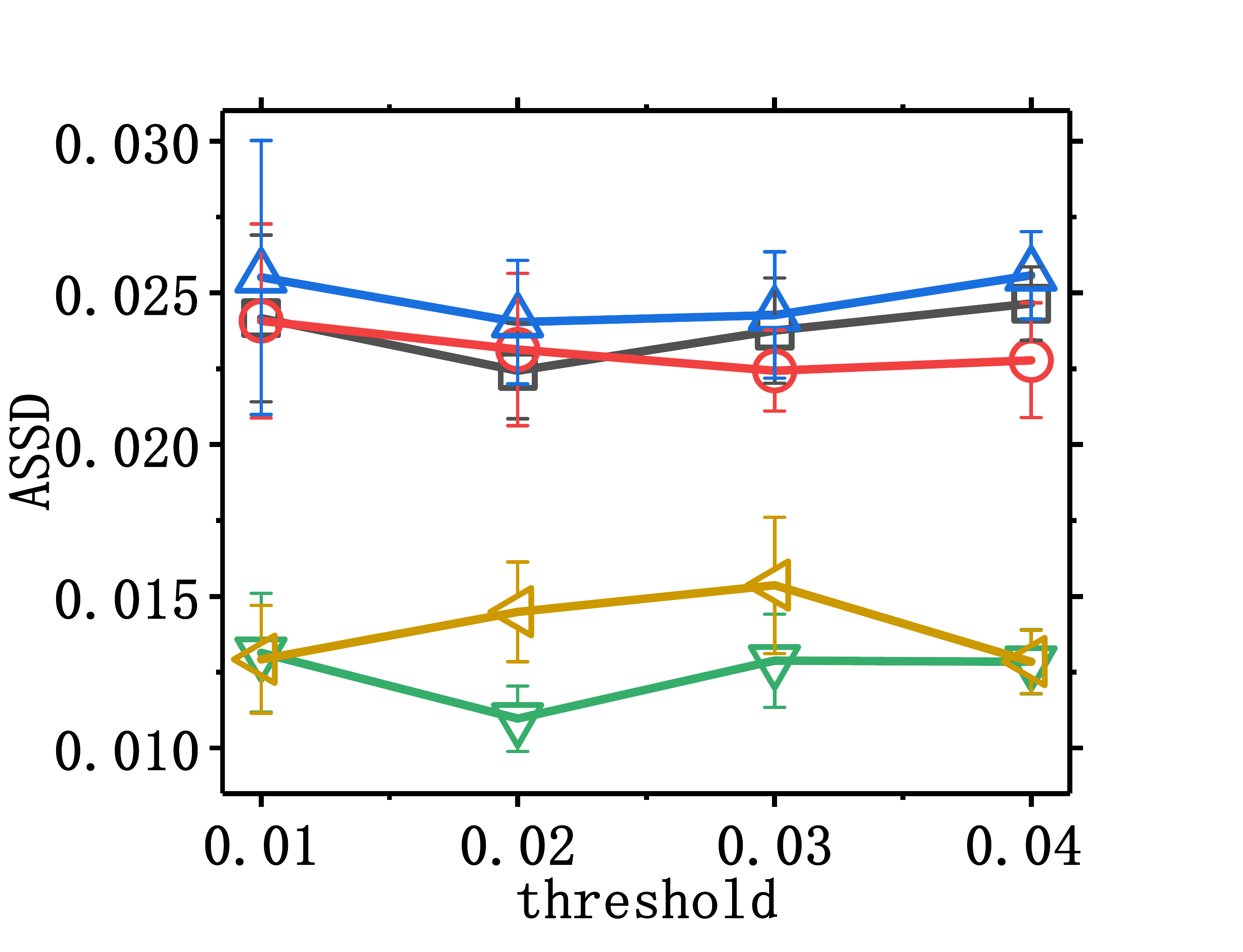}
		\end{minipage}
	}%
	
	\centering
	\caption{Promotion attack with varying perturbation threshold, on the AP dataset, showing 95\% confidence interval.}
	\label{ldac_diff_thre}
\end{figure*}

\begin{figure*}
		\centering
		\subfigure{
			\begin{minipage}[t]{0.45\columnwidth}
				\centering
				\includegraphics[width=\textwidth]{./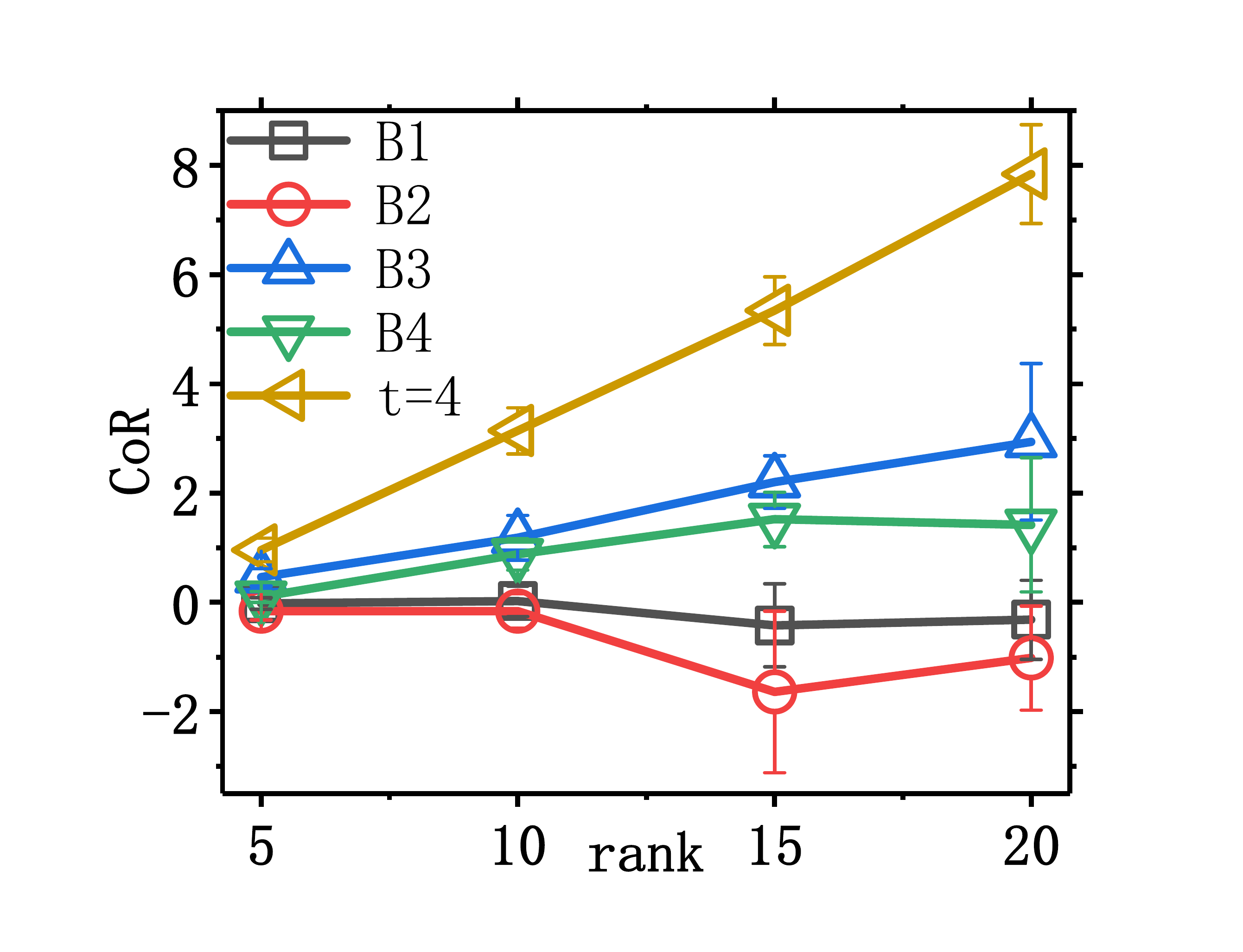}
			\end{minipage}%
		}%
		\subfigure{
			\begin{minipage}[t]{0.45\columnwidth}
				\centering
				\includegraphics[width=\textwidth]{./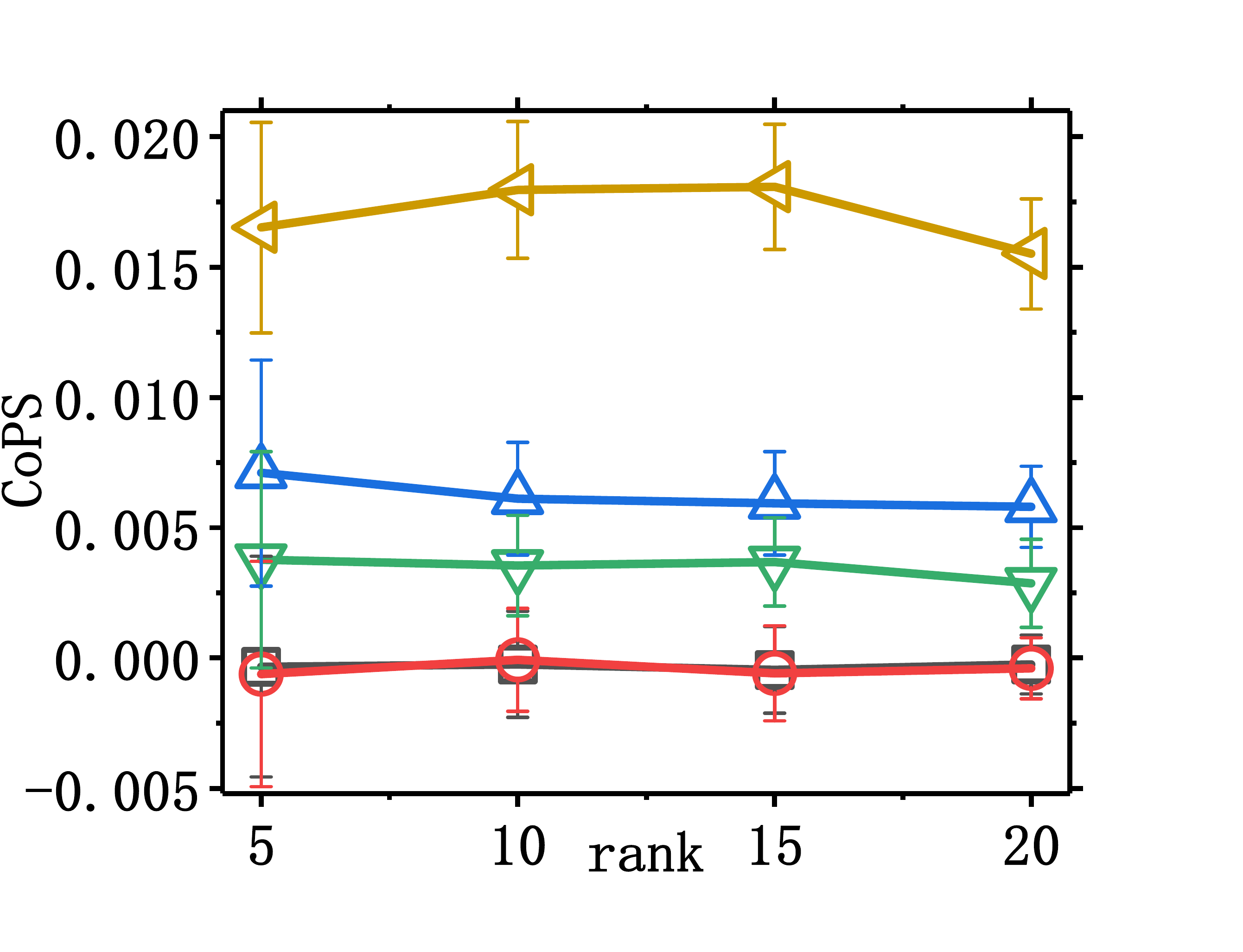}
			\end{minipage}%
		}%
		\subfigure{
			\begin{minipage}[t]{0.45\columnwidth}
				\centering
				\includegraphics[width=\textwidth]{./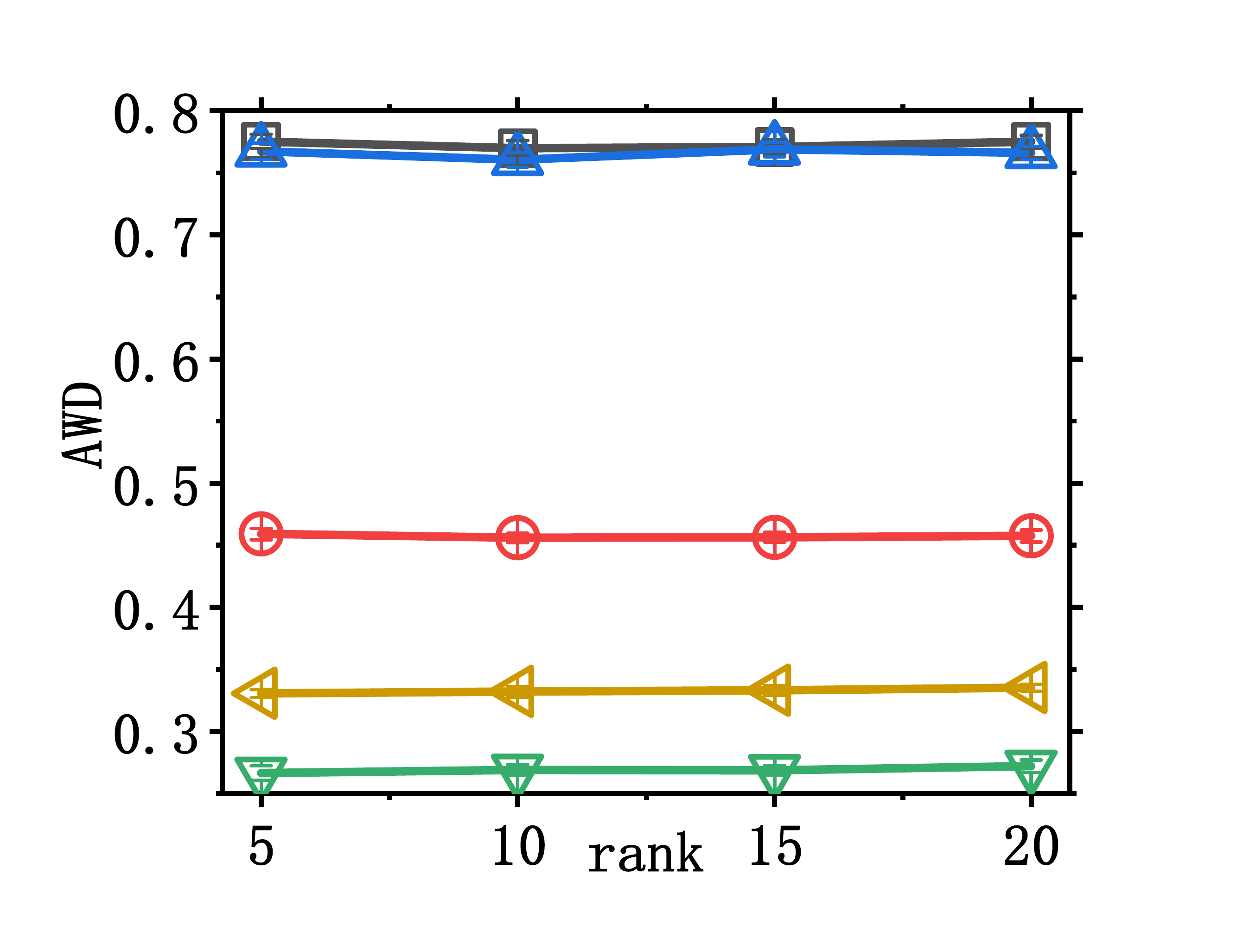}
			\end{minipage}
		}%
		\subfigure{
			\begin{minipage}[t]{0.45\columnwidth}
				\centering
				\includegraphics[width=\textwidth]{./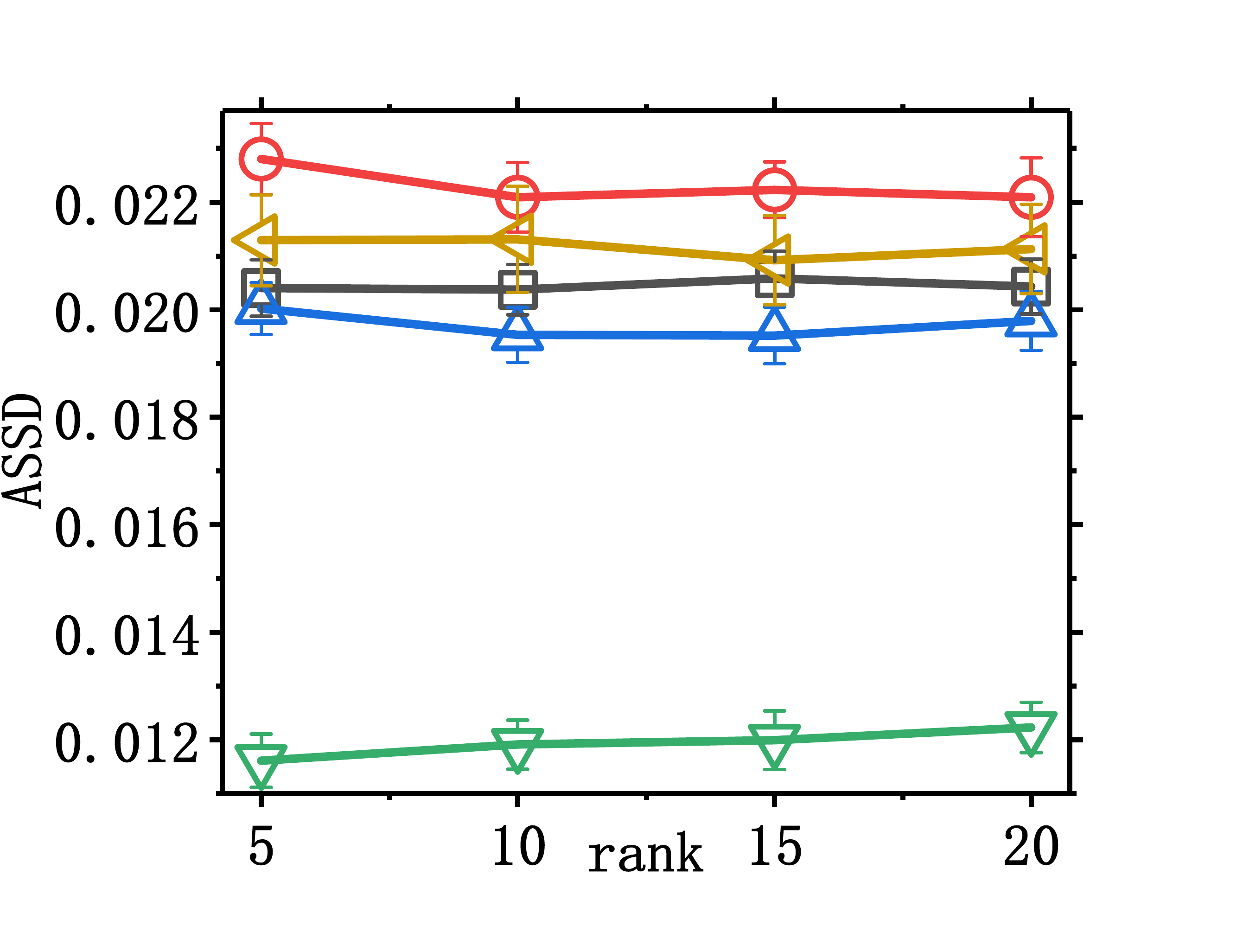}
			\end{minipage}
		}%
		
		\centering
		\vspace{-2mm}
		\caption{Promotion attack with varying original rank of the target topic, on the NIPS dataset, showing 95\% confidence interval.}
		\label{nips_diff_rank}
		\vspace{-2mm}
	\end{figure*}

\begin{figure*}
	\centering
	\subfigure{
		\begin{minipage}[t]{0.45\columnwidth}
			\centering
			\includegraphics[width=\textwidth]{./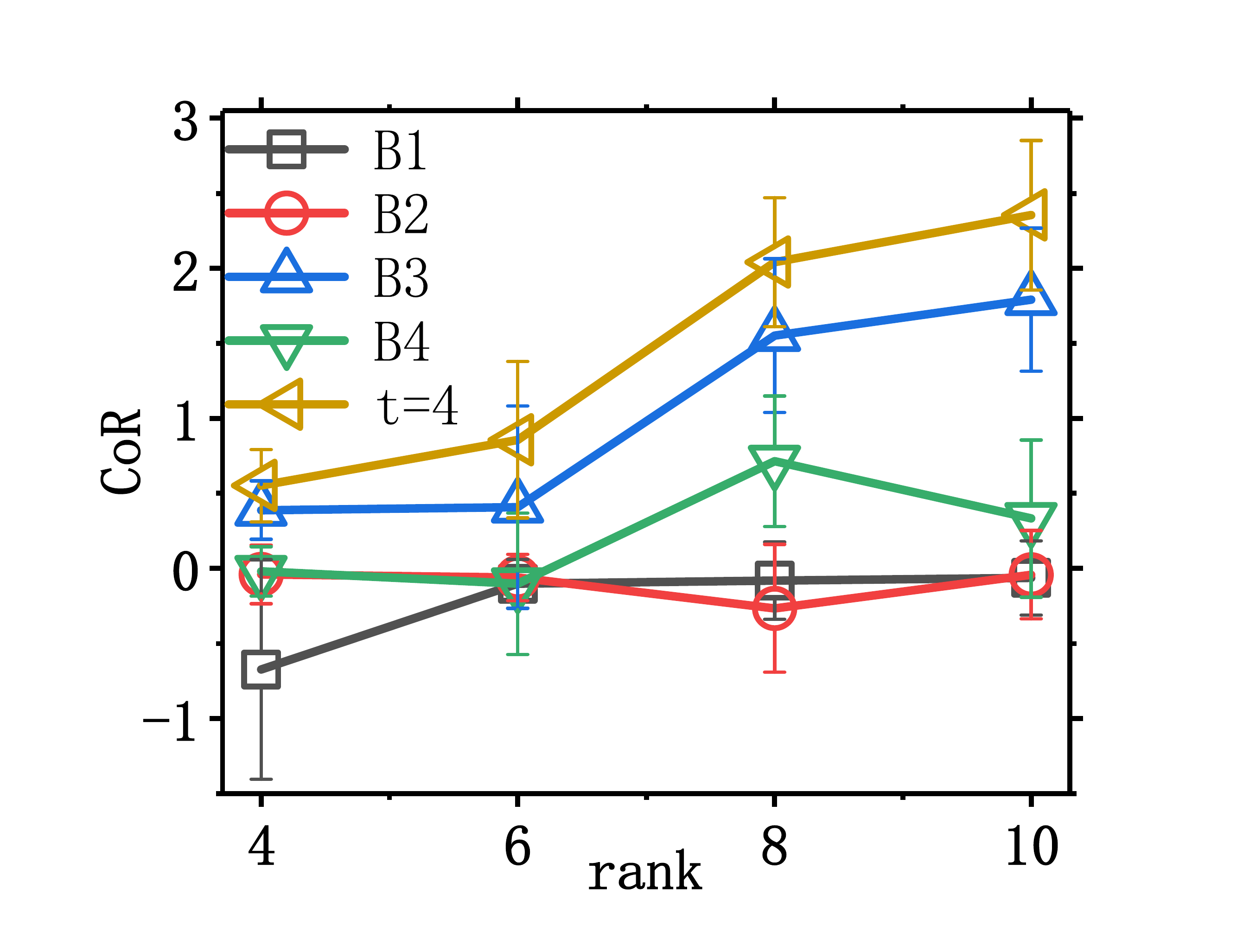}
		\end{minipage}%
	}%
	\subfigure{
		\begin{minipage}[t]{0.45\columnwidth}
			\centering
			\includegraphics[width=\textwidth]{./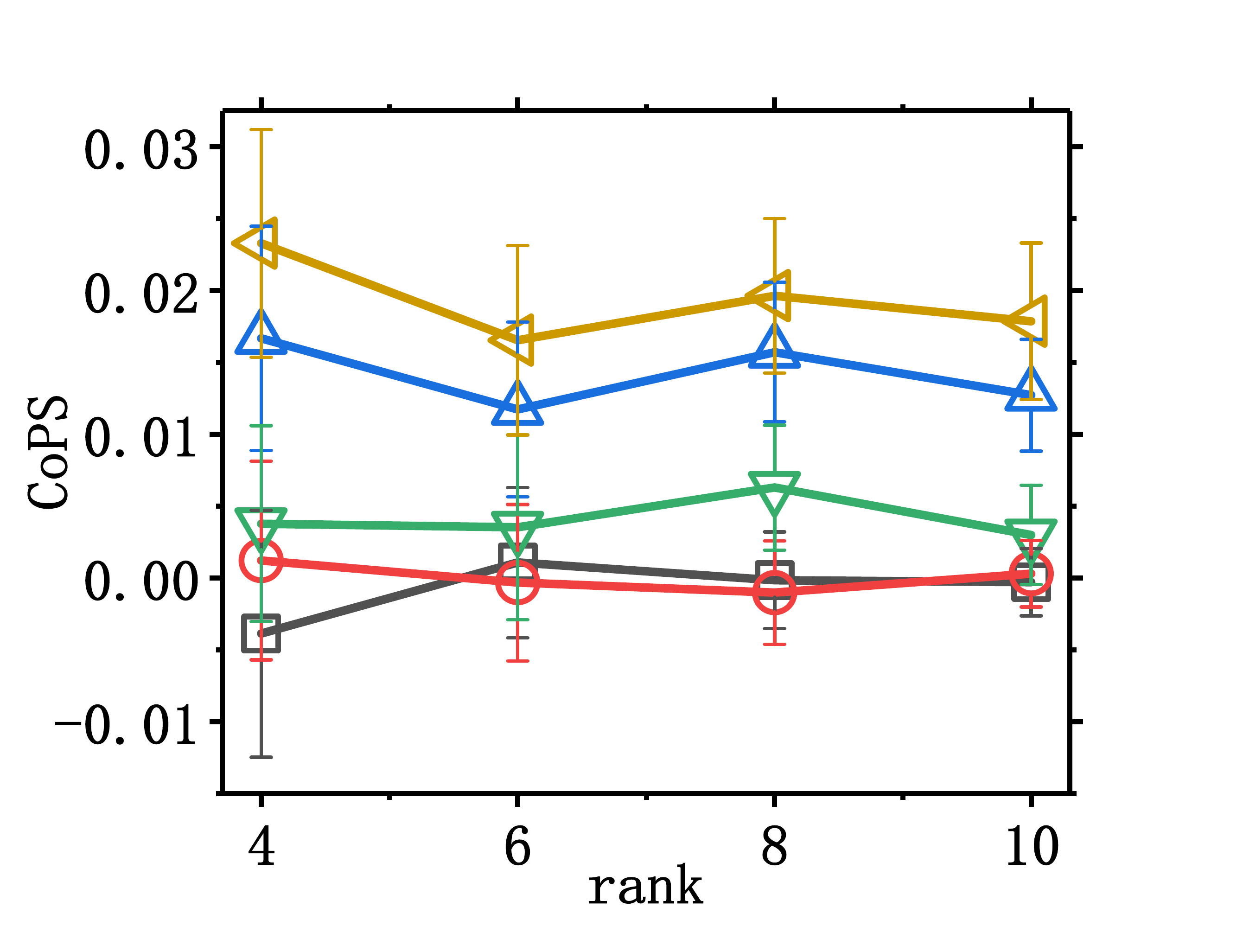}
		\end{minipage}%
	}%
	\subfigure{
		\begin{minipage}[t]{0.45\columnwidth}
			\centering
			\includegraphics[width=\textwidth]{./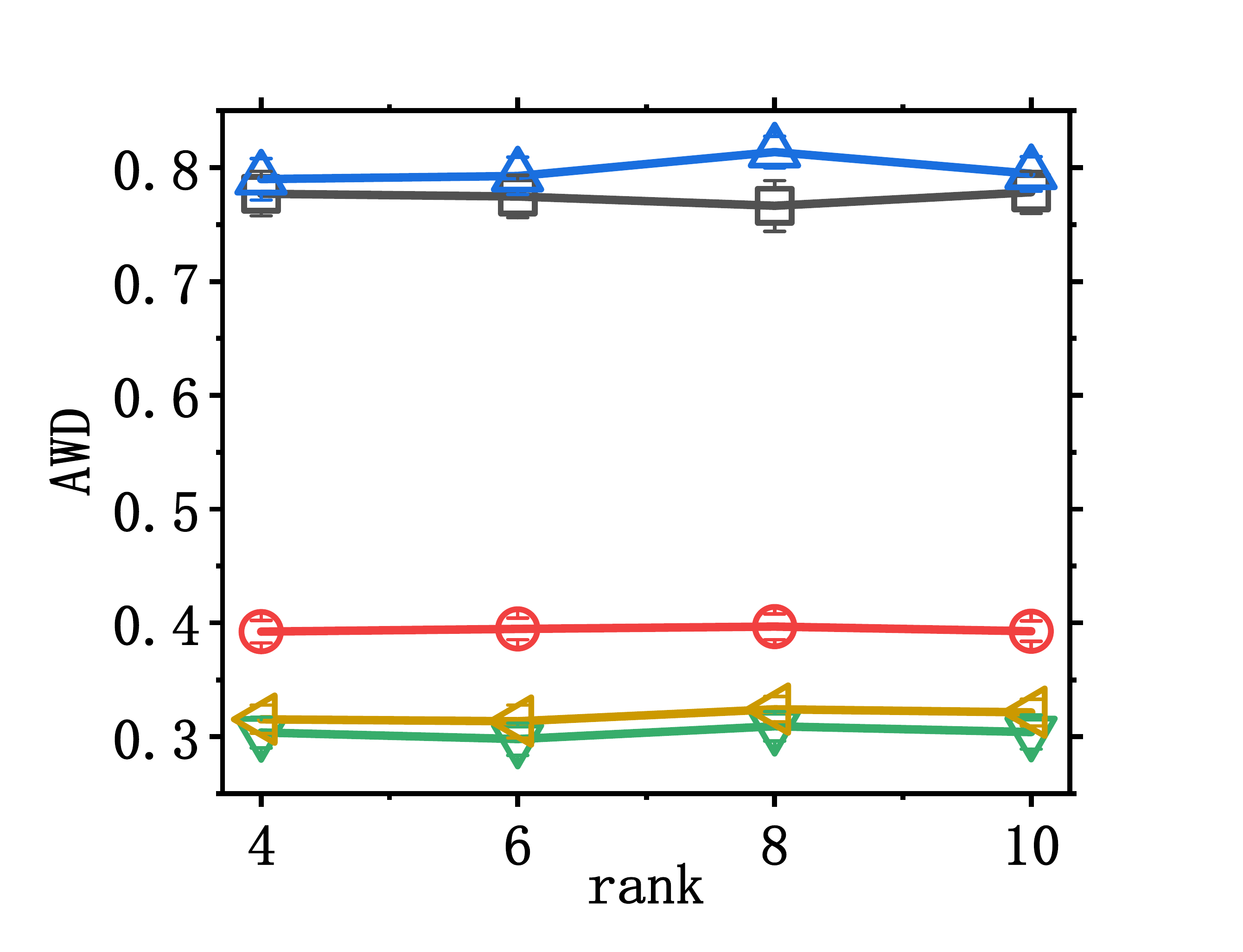}
		\end{minipage}
	}%
	\subfigure{
		\begin{minipage}[t]{0.45\columnwidth}
			\centering
			\includegraphics[width=\textwidth]{./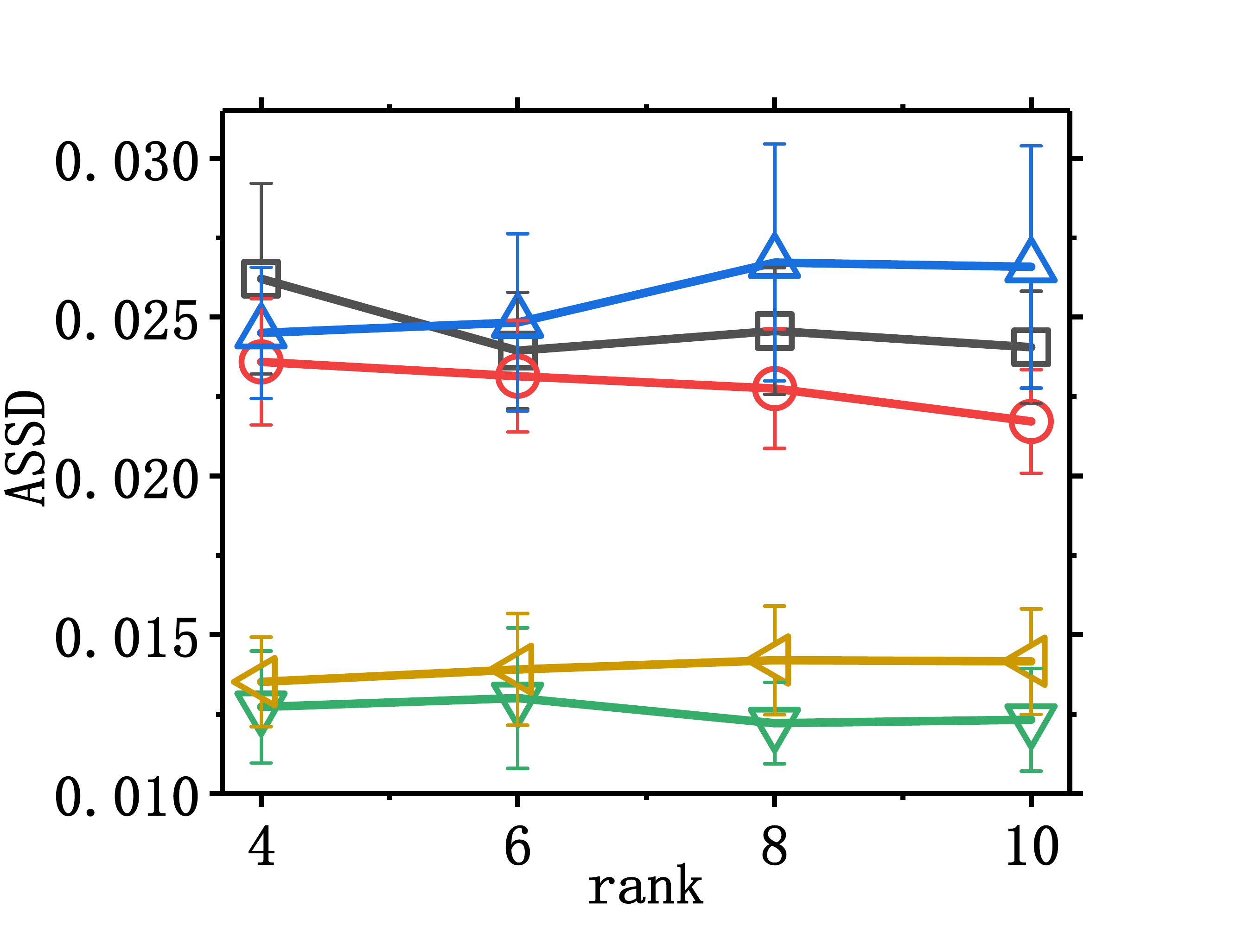}
		\end{minipage}
	}%
	
	\centering
	\caption{Promotion attack with varying original rank of the target topic, on the AP dataset, showing 95\% confidence interval.}
	\label{ldac_diff_rank}
\end{figure*}

\noindent\textbf{Evaluation metrics}
We use the following $4$ metrics for evaluation, where the first $2$ measure the \textit{effectiveness} of attack, and the last $2$ measure the \textit{evasiveness}. i) \textit{Change of rank (\textbf{CoR})} is the change of rank for a topic $k$:
$\text{CoR}=|\text{Rank}'_k - \text{Rank}_k|$ where $\text{Rank}_k$ and $\text{Rank}_k'$ are respectively the rank of topic $k$ before and after the attack. ii) \textit{Change of probability score (\textbf{CoPS})} means the change in probability score of target topic $k$:
$\text{CoPS}=|\theta_k - \theta^{adv}_k|$. 
iii) \textit{Average word distance (\textbf{AWD})} is the average distance between the target-replacement word pairs in the word vector space: $\text{AWD}=1/|\mathcal{W}|\sum_{(w,w')\in (\mathcal{W},\mathcal{W}')}(1-\cos(w,w')).$ 
iv) \textit{Average sentence semantic distance} (\textbf{ASSD}).
We first use BERT~\citep{devlin2019bert} 
to encode sentences into high dimensional vectors. We then calculate the accumulated cosine distance of all sentences that are perturbed to measure the semantic distance of the victim and adversarial documents. Denote the original sentence and its perturbed version respectively as $\mathbf{s}$ and $\mathbf{s}^{adv}$, this is defined as: $\text{ASSD}=1/|\mathcal{S}|\sum_{\mathbf{s}\in \mathcal{S}}(1-\cos(\mathbf{s},\mathbf{s}^{adv})),$ where $\mathcal{S}=\{\mathbf{s}\ |\  \mathbf{s}\ni w \cap w\in \mathcal{W}\}$ denotes the set of sentences which contain adversarial words. $\mathbf{s}\ni w$ means sentence $s$ contains $w$. ASSD measures sentence level evasiveness.

\begin{figure*}
	\centering
	\subfigure{
		\begin{minipage}[t]{0.45\columnwidth}
			\centering
			\includegraphics[width=\textwidth]{./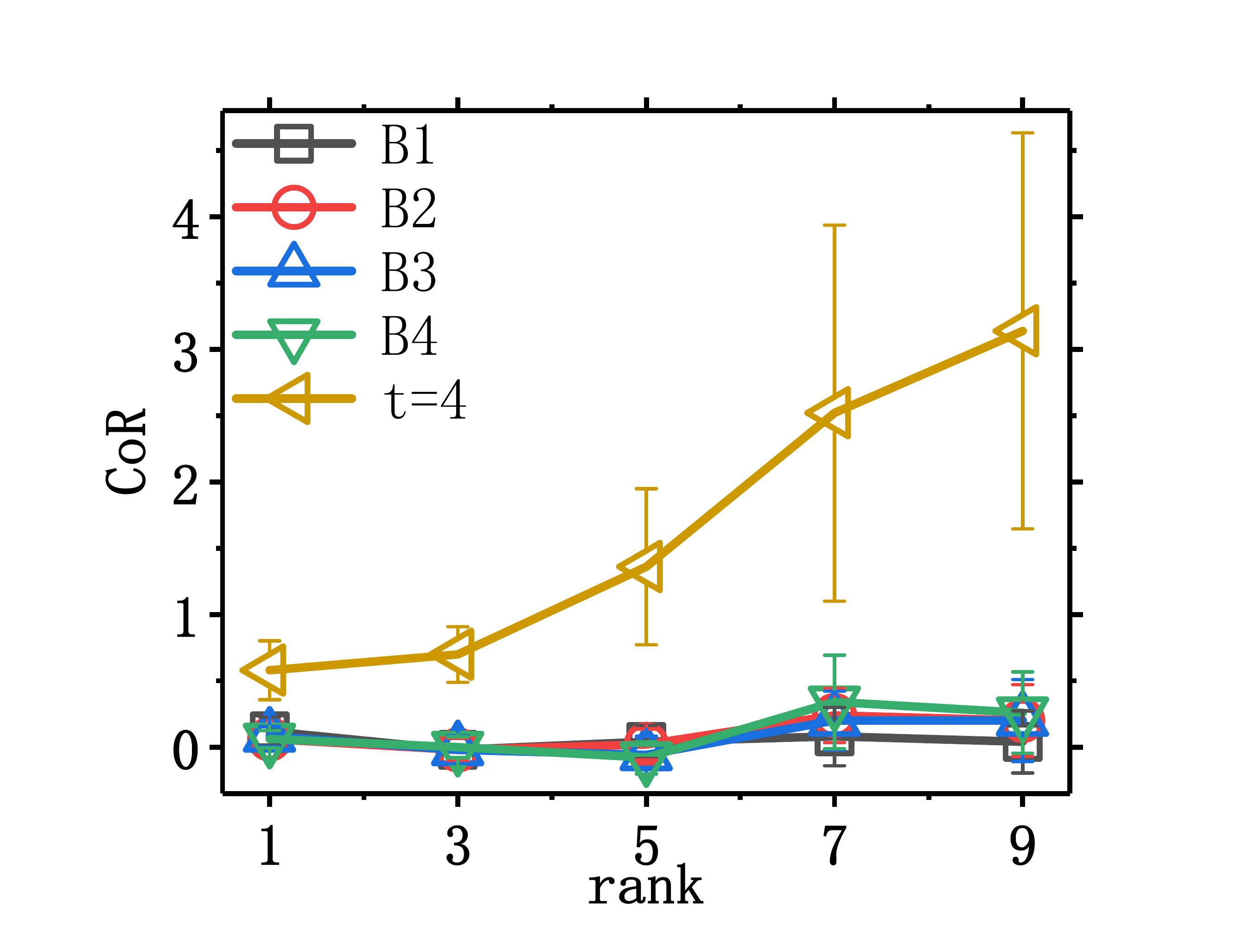}
		\end{minipage}%
	}%
	\subfigure{
		\begin{minipage}[t]{0.45\columnwidth}
			\centering
			\includegraphics[width=\textwidth]{./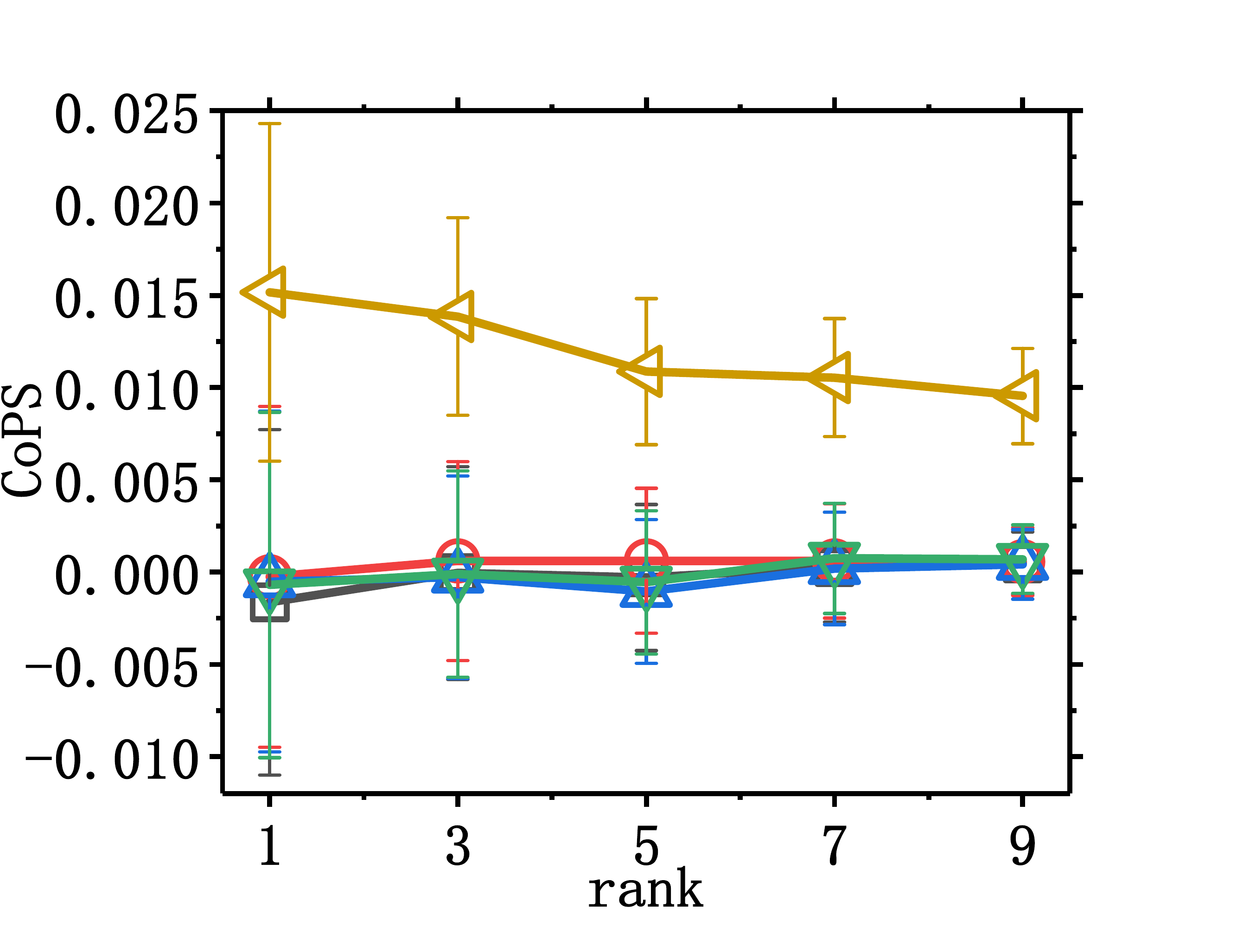}
		\end{minipage}%
	}%
	\subfigure{
		\begin{minipage}[t]{0.45\columnwidth}
			\centering
			\includegraphics[width=\textwidth]{./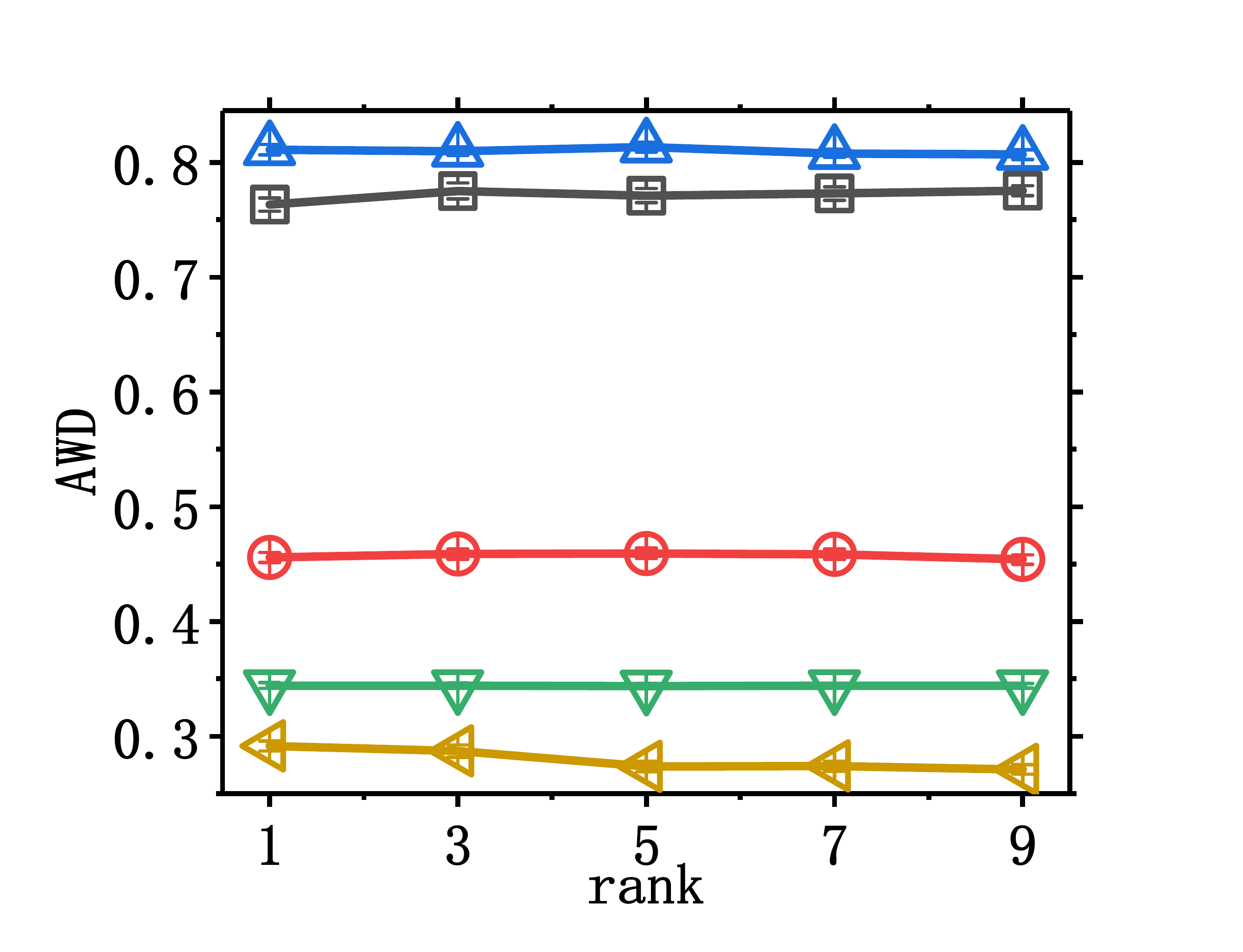}
		\end{minipage}
	}%
	\subfigure{
		\begin{minipage}[t]{0.45\columnwidth}
			\centering
			\includegraphics[width=\textwidth]{./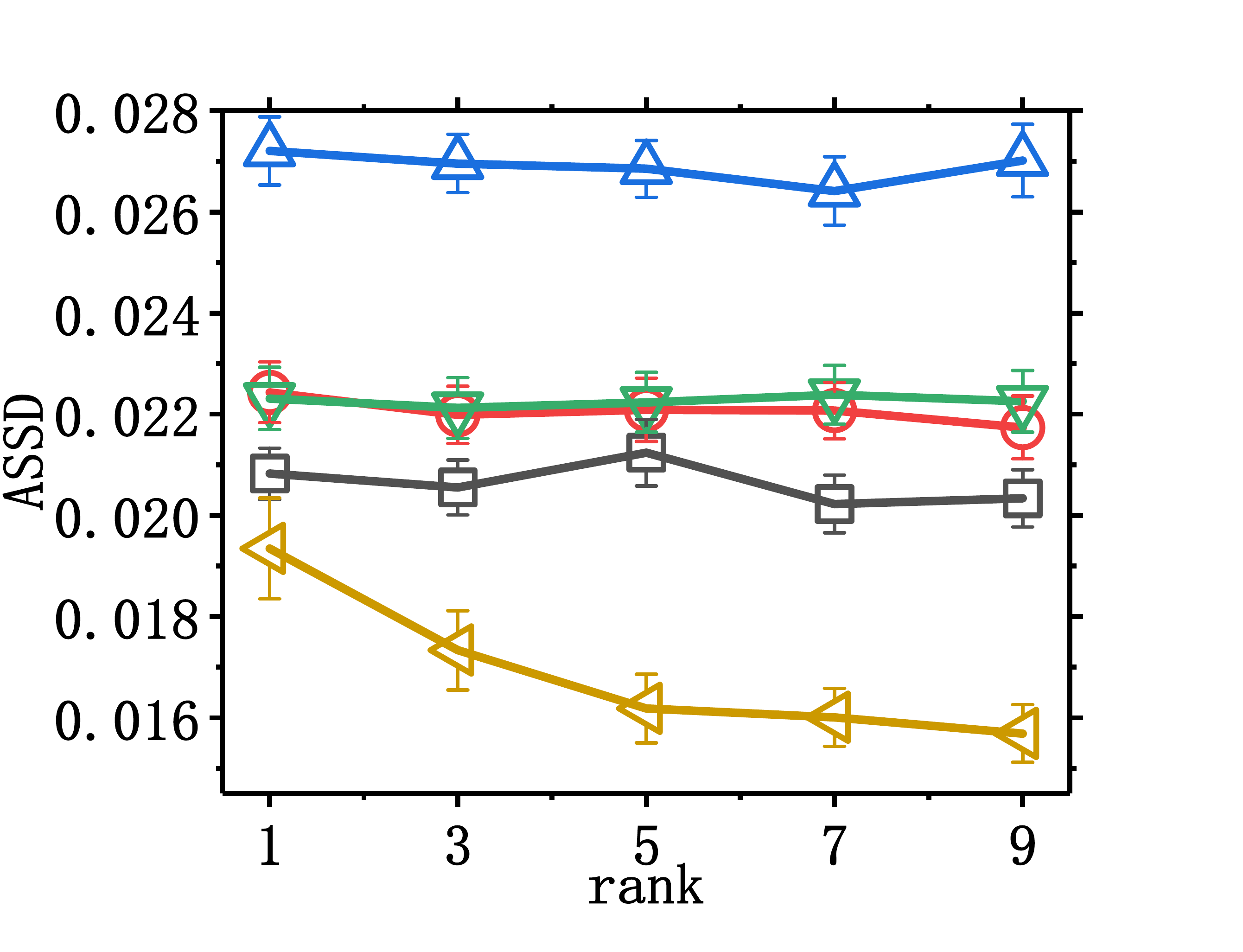}
		\end{minipage}
	}%
	
	\centering
	\vspace{-1mm}
	\caption{Demotion attack with varying original rank of target topic, on the NIPS dataset, showing 95\% confidence interval.}
	\label{fig:nips_demotion}
\end{figure*}

\begin{figure*}
	\centering
	\subfigure{
		\begin{minipage}[t]{0.45\columnwidth}
			\centering
			\includegraphics[width=\textwidth]{./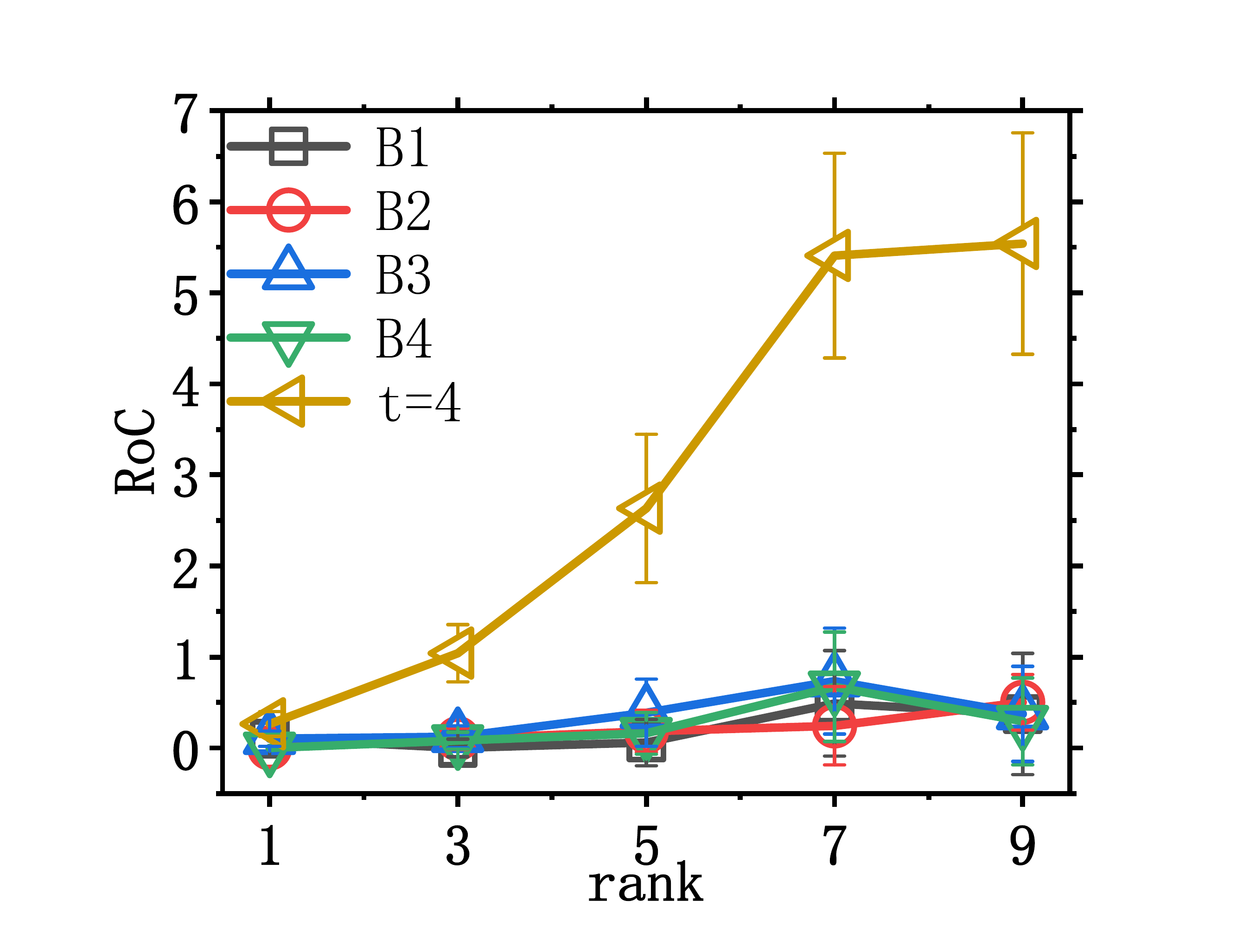}
		\end{minipage}%
	}%
	\subfigure{
		\begin{minipage}[t]{0.45\columnwidth}
			\centering
			\includegraphics[width=\textwidth]{./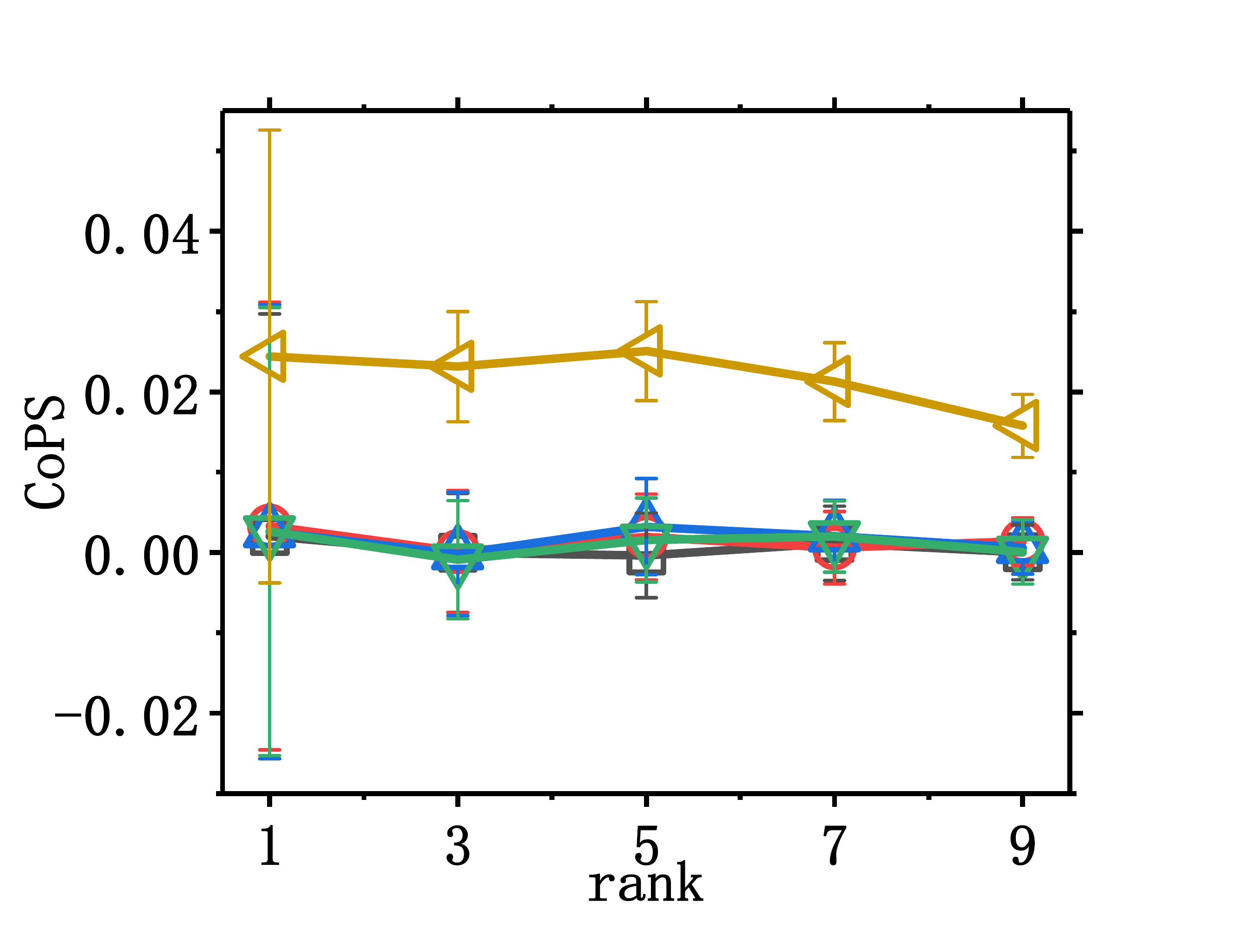}
		\end{minipage}%
	}%
	\subfigure{
		\begin{minipage}[t]{0.45\columnwidth}
			\centering
			\includegraphics[width=\textwidth]{./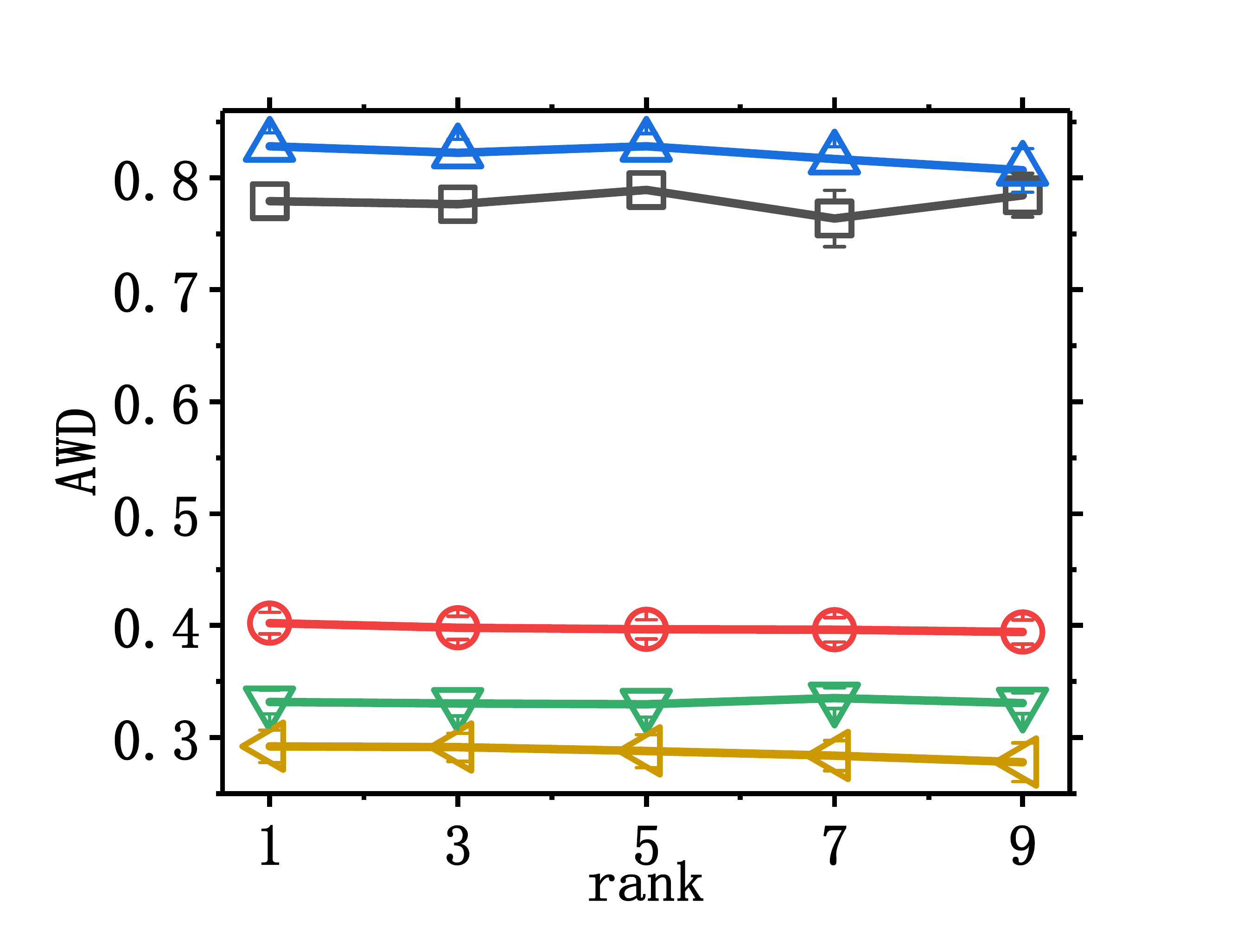}
		\end{minipage}
	}%
	\subfigure{
		\begin{minipage}[t]{0.45\columnwidth}
			\centering
			\includegraphics[width=\textwidth]{./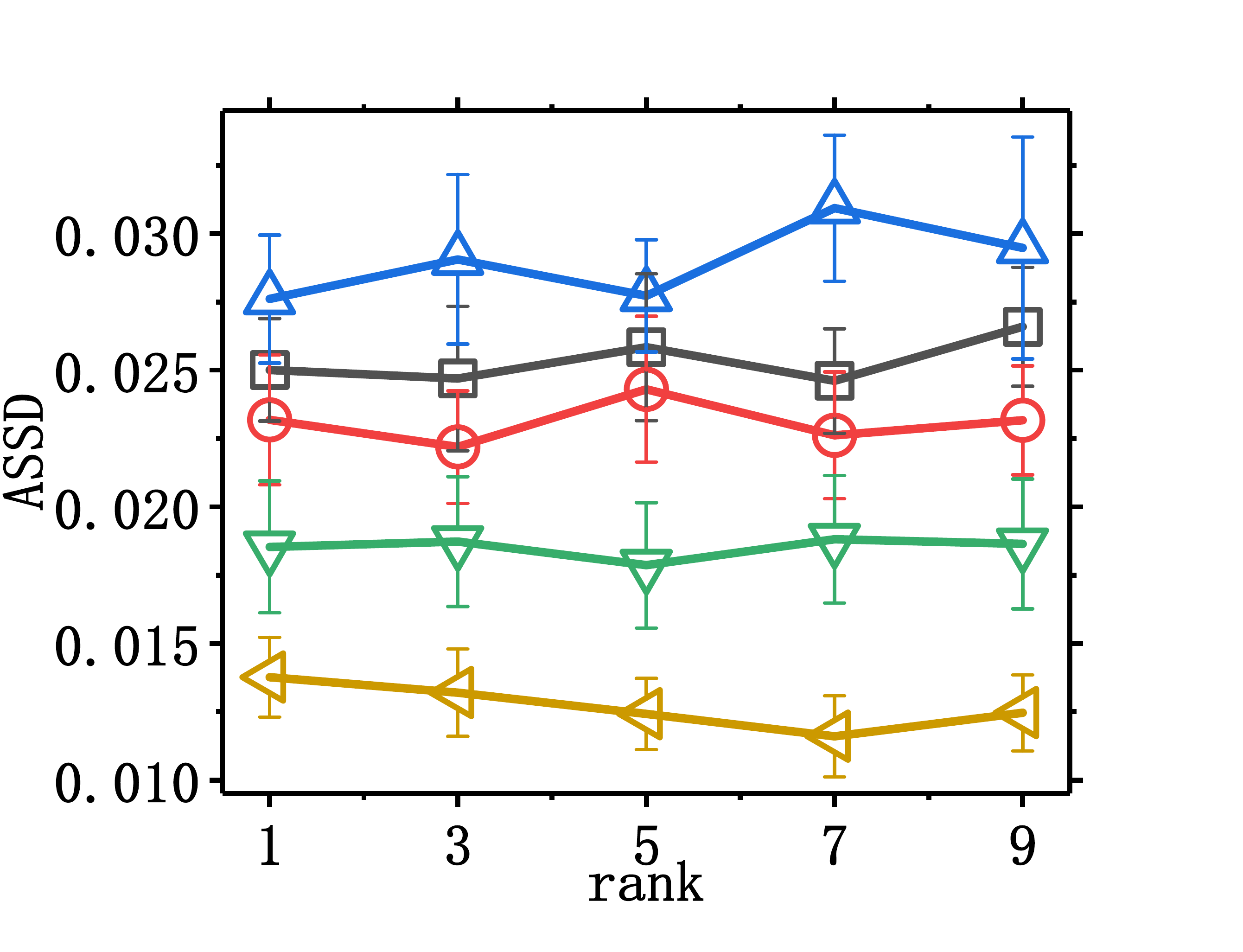}
		\end{minipage}
	}%
	
	\centering
	\caption{Demotion attack with varying original rank of target topic, on the AP dataset, showing 95\% confidence interval.}
	\label{ldac_demotion}
\end{figure*}

\subsection{Promotion Attack}
We first evaluate the performance of promotion attack on the two datasets using the above $4$ metrics, under the following settings: i) Approximation level $t$ (in Eq.\eqref{eq:analytic_estimate}) ranges over $1-6$ (default $4$). ii) Word substitution strategies (Step $1$ in the previous section), including synonym, word-embedding and a mixture of the two (default mixture).
iii) Perturbation threshold $\kappa$ (in Eq.\eqref{eq:constr_capacity}), ranges over $[0.5\%, 1\%, 2\%, 3\%]$ (default $\kappa=1\%$). (iv) Original rank of target topic, ranges over $[5, 10, 15, 20]$ (default $10$). When evaluating one parameter, we fix the other parameters with default values. We randomly choose $50$ test document samples as victim samples. For all settings, we set word distance threshold $\sigma=0.6$. For ASSD, we treat the pre- and proceeding $5$ words combined of the current target word as a sentence.
For the AP dataset, the different experiment settings for promotion attack are the same as before:i) Approximation level $t$ (default 4); ii) Word substituion strategies (default mix); iii) Perturbation threshold $\kappa$, which ranges over $[1\%, 2\%, 3\%, 4\%]$ (default $\kappa=2\%$).
(iv) Original rank of target topic, which ranges over $[4, 6, 8, 10]$ (default $6$). 
As shown in Table 1, documents in the AP data are much shorter (192 on average) than the NIPS dataset (3211 on average), using a perturbation threshold of $1\%$ means for some documents whose length is less than 200, only one word can be perturbed. This makes it too challenging for the attack algorithms including EvaLDA. Therefore we set larger default perturbation thresholds for the experiments on the AP dataset. Similarly, because the documents are much shorter in the AP dataset, the number of topics with non-zero probability scores are also much smaller than that in the NIPS dataset. Therefore, the original rank of the target topic is set higher and denser than that in the NIPS dataset.
We still randomly choose $50$ samples and set word distance threshold $\sigma = 0.6$.

\noindent\textbf{Approximation level}
Figure \ref{fig:nips_diff_level} shows the results with different $t$ values from $1$ to $6$ on the \textit{NIPS} dataset. We can see that by replacing only $1\%$ of the words, EvaLDA  significantly outperforms the baselines in promoting the rank and probability score of the target topic. In particular, the best CoR result is obtained when $t=4$, where EvaLDA promotes the original rank from $10$ to around $7$ on average. We also notice that the performance of EvaLDA in all metrics is not sensitive to $t$ values. This indicates the \textit{robustness} of EvaLDA. Meanwhile, the AWD and ASSD values are very small (approximately 0.3 for AWD and 0.02 for ASSD). This indicates that the average perturbation per word is reasonable and the average perturbation per sentence is almost negligible. Note that \textbf{B4} achieves the best performance in AWD and ASSD. This is expected as \textbf{B4} always selects replacements as the nearest neighbor of the target word in the vector space. However, the effectiveness performance of \textbf{B4} is far worse than EvaLDA. 
Figure \ref{ldac_diff_level} shows the results for the \textit{AP} dataset. For all the metrics, similar patterns are observed compared with experiments on the NIPS dataset. 
Note that we set the perturbation threshold as $2\%$, original rank $6$ and mixed word substitution strategy in this setting.
Based on this set of experiments, we choose level $4$ for the rest of the experiments on both datasets.

\begin{table}[htp]
\caption{The result of different word substitution strategies on the NIPS dataset. EM, Syno, and Mix respectively stand for embedding, synonyms and mixture of the two. The best result is highlighted.}\vspace{-2mm}
\label{tab:diff_similar}
\centering\scriptsize
\setlength{\tabcolsep}{3.5pt}
\begin{tabular}{ccccc}
\hline
 & CoR & CoPS & AWD & ASSD \\ \hline
EM & 2.600$\pm$0.4504 & 0.0165$\pm$0.0026 & 0.3342$\pm$0.0035 & 0.0209$\pm$0.0009 \\
Syno & 0.5400$\pm$0.3097 & 0.0025$\pm$0.0021 & \textbf{0.2273$\pm$0.0085} & \textbf{0.0133$\pm$0.0005} \\
Mix & \textbf{2.600$\pm$0.4258} & \textbf{0.0170$\pm$0.0026} & 0.3321$\pm$0.0034 & 0.0209$\pm$0.0009\\ \hline
\end{tabular}

\end{table}

\begin{table}[htp]
	\caption{The result of different word replacement strategies on the AP dataset. EM, Syno, and Mix respectively stand for embedding, synonyms using WordNet and a mixture of the two. The best result is highlighted in bold fonts.}\label{tab:ldac_diff_similar}
	\centering\scriptsize
	\setlength{\tabcolsep}{3.5pt}
	\begin{tabular}{lllll}
		\hline
		& CoR & CoPS & AWD & ASSD \\ \hline
		EM & 0.8913$\pm$0.2642 & 0.0225$\pm$0.0073 & 0.3147$\pm$0.0176 & 0.0142$\pm$0.0017 \\
		Syno & 0.4348$\pm$0.2459 & 0.0116$\pm$0.0077 & \textbf{0.2602$\pm$0.0183} & \textbf{0.0104$\pm$0.0013} \\
		Mix & \textbf{0.9348$\pm$0.2722} & \textbf{0.0232$\pm$0.0075} & 0.3149$\pm$0.0180 & 0.0143$\pm$0.0017 \\ \hline
	\end{tabular}
\end{table}

\noindent\textbf{Word substitution strategy}
Table \ref{tab:diff_similar} shows the result on the \textit{NIPS} dataset when varying word substitution strategies. Note that the performances of the baselines are not dependent on the word substitute strategies, and therefore their performances are the same as in Figure~\ref{fig:nips_diff_level} and we do not repeat them here. We can see that word embedding strategy (EM) achieves much better CoR and CoPS results than word synonyms strategy (Syno), while Syno achieves better AWD and ASSD results. This is intuitive because the solution space of Syno is significantly smaller than EM, as words usually have few synonyms. The mixture of the two (Mix), achieves slightly better results in all metrics compared with EM. This indicates that Syno, while performing worst in terms of effectiveness by its own, can be a good complementary to EM.
Table \ref{tab:ldac_diff_similar} shows the result on the \textit{AP} dataset. Similarly, EM has better performance than Syno. A mix of the two has the best performance in CoR and CoPS, despite that AWD and ASSD of mix are almost the same as EM.


\noindent\textbf{Perturbation threshold} 
Figure \ref{nips_diff_thre} shows performance comparisons of EvaLDA with baselines under varying perturbation threshold values, on the \textit{NIPS} dataset. We can see that in general, as the threshold increases, both CoR and CoPS increase. Similarly, EvaLDA is far superior to the baselines under all threshold values in terms of CoR and CoPS and is comparable to the baselines in  AWD and ASSD. Be reminded that the best performance of AWD and ASSD of \textbf{B4} is due to the fact that it always selects the nearest neighbor of the target word in vector space. Interestingly, we see that the average sentence level evasiveness value (ASSD) of EvaLDA decreases when threshold value increases. Our conjecture for this is when threshold increases (i.e., solution space is larger), the newly found target-replacement word pairs by EvaLDA are closer.
Figure \ref{ldac_diff_thre} shows the result on the \textit{AP} dataset. In terms of effectiveness (CoR and CoPS), EvaLDA is superior to the baseline methods under almost all threshold values. Note that \textbf{B3} also performs well under different perturbation thresholds. This is because \textbf{B3} selects a random replacement word for the target word and it is more likely to shift the target topic. However, such attacks are impractical as it is highly detectable due to the large word-level and sentence variations, which are indicated by the very large AWD and ASSD values.

\noindent\textbf{Original rank of target topic}
Intuitively, promotion attacks on topics that are ranked lower should be more effective than topics that are ranked higher. Results in both Figures \ref{nips_diff_rank} and \ref{ldac_diff_rank} follow this intuition. In addition, since EvaLDA does not consider the rank of the original topic in the optimization process, it works constantly in CoPS regardless of the original rank. Moreover, we can see that the AWD values of EvaLDA are comparable with \textbf{B1}-\textbf{B3} and slightly worse than \textbf{B4}. 
As shown in Figure \ref{ldac_diff_rank}, the overall results on the \textit{AP} dataset are similar to those of the \textit{NIPS} dataset. The lower ranked topics are easier to promote, even if topics with higher rankings can get more score promotion, its ranks are still relatively stable. As we can see, EvaLDA performs better in CoR and CoPS, and also competitive in AWD and ASSD.

\subsection{Demotion Attack}
As shown in Eq.\eqref{eq:demote}, rank demotion attack aims at decreasing the rank of a target topic. Figure \ref{fig:nips_demotion} shows the result of rank demotion attack with varying original rank of the target topic, on the \textit{NIPS} dataset. We can see that, intuitively, the CoPS value decreases when the original rank is lower. However, the CoR metric does not decrease w.r.t. the original rank. This seems to be counter-intuitive, but in fact is correct because although the attack to originally higher ranked topics achieves higher CoPS values, the original probability scores of these highly ranked topics are also higher. Therefore it is difficult to change it from a high score to a very low score, rendering it difficult to demote.
In terms of AWD and ASSD, EvaLDA performs even better than \textbf{B4}. This is because the solution space is larger than that of promotion attack, so EvaLDA can achieve aggressive goals while ensuring similarity.
Figure \ref{ldac_demotion} shows the topic demotion attack performance on the \textit{AP} dataset. We can see that EvaLDA beats the baselines by a very large margin for CoR and CoPS, while having the best AWD and ASSD values too, which are even smaller than \textbf{B4}.

\begin{figure}
	\centering
	\includegraphics[width=0.99\columnwidth]{./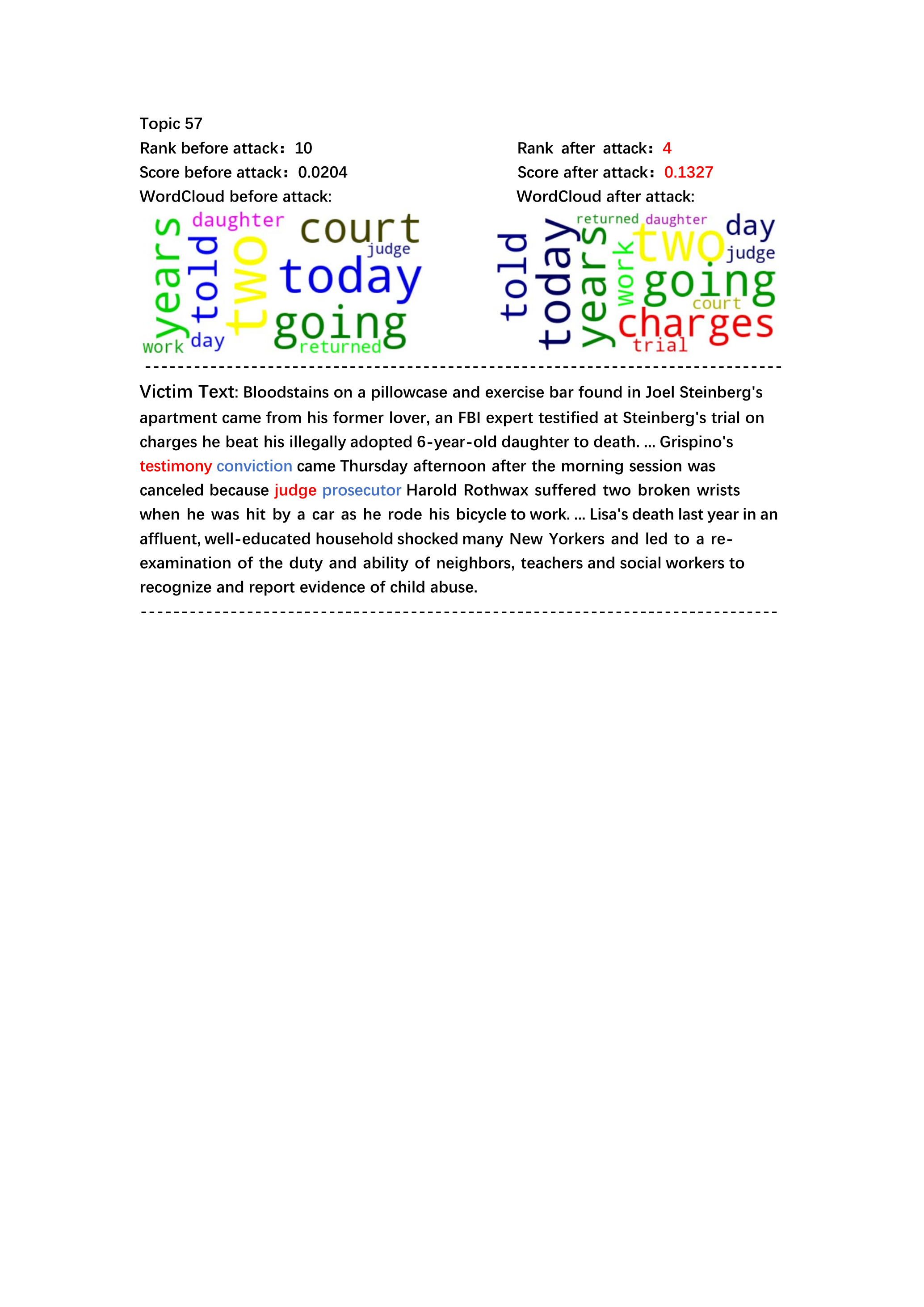}
	\vspace{-1mm}
	\caption{An adversarial sample generated by EvaLDA. Target and replacement words are resp. in red and blue. By replacing 2 words ``testimony" and ``judge" with ``conviction" and ``prosecutor", the rank of the topic is greatly promoted.}
	\vspace{-1mm}
	\label{fig:case}
\end{figure}
\subsection{Case Study}
We now show via a concrete example how EvaLDA works. Because documents from the AP dataset are shorter and the contents are news from Associated Press, we select a victim document sample from the AP dataset. The length of the document is 111 words. We generate an adversarial sample using EvaLDA with $t=4$ and perturbation threshold $\kappa= 0.02$ -- which means we can perturb only 2 words. Figure \ref{fig:case} shows a piece of the sample document, where target and replacement words are respectively in red and blue. We choose topic $57$ as the target topic, which ranks $10$th originally. We can see that by replacing the words "testimony" and "judge" with reasonably close words "conviction" and "prosecutor", the rank of the topic is promoted to $4$ and the topic probability score increases from $0.0204$ to $0.1327$. At the top of the text, the topic WordCloud before and after the attack also shows that the two new words "charges" and "trial" appear with evidently greater weights.


\section{Conclusion, Limitations, and Future Work}
This work is the first to study evasion attacks to LDA models. The formulated optimization problem, which is provably \textit{NP-hard}, is solved via our proposed novel algorithm EvaLDA. EvalDA consists of an efficient estimate of the topic-word distribution via a surrogate CGS-based inference procedure and a greedy target word selection and replacement procedure. Via extensive experimental evaluations on two distinct datasets, we show that EvaLDA achieves superb performances in both topic rank promotion and demotion attacks under various settings.
Despite the advantages, EvaLDA does have a few \textit{limitations}. First, it is limited to attacks to CGS-based LDA models. It remains unclear whether it works on VI-based LDA models. A potential idea is via attack strategy transferred from a CGS-based LDA model. Second, EvaLDA assumes a white-box setting and acts more like a proof-of-concept analysis in the worst-case scenario. It is interesting to see whether attacks can be effective in a black-box setting. Another critical future work, which is also the ultimate goal of studying adversarial attacks to LDA models, is to design effective defense strategies to against such attacks.

\section{Acknowledgements}
The paper is supported by the National Natural Science
Foundation of China (Grant No. 62002092). Haipeng Chen is funded by the Harvard Center for Research on Computation and Society. 

\newpage
\section{Potential Ethical Impact}
The goal of this work is to provide insights into the power and limitations of potential adversarial attacks towards LDA models. By exposing this work, users and service providers of LDA models can be alerted in advance the potential risks of such models. Moreover, service providers or other researchers are able to design defense strategies towards such attacks and therefore improve the robustness of the LDA models. The potential downside is that malicious hackers may be aware of the vulnerability of the LDA models and further make exploits of it in their own favor. There have been discussions on whether or not we should expose vulnerabilities of cyber systems\footnote{\url{https://medium.com/@ptcrews/to-disclose-or-not-disclose-the-ethics-of-vulnerability-disclosure-aaf09c1ab4b0}}. However, for the LDA users and service providers, without being aware of the vulnerabilities would pose critical risk of their product and systems which may cause a huge loss. Therefore, exposing such vulnerabilities of LDA models is the necessary first step to take, so that appropriate defense strategies can be designed to improve the safety and robustness of the systems built on LDA models.

\bibstyle{aaai21}
\bibliography{lda_attack}

\typeout{get arXiv to do 4 passes: Label(s) may have changed. Rerun}
\end{document}


\linenumbers
\maketitle

\section{Proof of Theorem 1}
\begin{theorem}
Given an oracle that tells the algorithm the explicit value of $Q(\mathcal{W},\mathcal{W}')$ for an attack strategy $(\mathcal{W},\mathcal{W}')$, the \textit{Attack-LDA} problem formulated above is $\mathcal{NP}$-\textit{hard}.
\end{theorem}
\begin{proof}
The key idea is to reduce the (binary) combinatorial optimization problem with cardinality constraints (COPCC, which is proven to be $\mathcal{NP}$-\textit{hard}~\cite{bruglieri2006annotated}) to our defined \textit{Attack-LDA} problem. 

An \textit{arbitrary} instance of COPCC can be expressed as:
\begin{align}\label{eq:copcc-instance}
\min_x  f(x) \ \ 
\text{s.t.} \ \  x\in \{0,1\}^d : ||x||_0\leq C,
\end{align} where $x$ is a $d$-dimensional binary indicator vector which corresponds to the selection of items. That is, $x_i=1$ indicates the $i$-th item is selected and otherwise not. $|| x||_0$ denotes $l$-0 norm of $x$. 

We then construct a \textit{special} instance of the \textit{Attack-LDA} problem as follows. We first let 
\begin{align*}
Q^*(\mathcal{W}) &= \max_{\mathcal{W}'} Q(\mathcal{W},\mathcal{W}')\\
\text{s.t.} \quad &D(w,w')\leq \sigma, \quad \forall (w,w') \in (\mathcal{W}, \mathcal{W}').
\end{align*} The \textit{Attack-LDA} problem is then transformed as 
\begin{align}\label{eq:attack-lda-instance}
\max_{\mathcal{W}} \quad Q^*(\mathcal{W}) \ \ 
\text{s.t.} \ \  | \mathcal{W}| \leq |\mathbf{w}^{vic}| \cdot  \kappa
\end{align}
Because finding the optimal $w'$ for a given $w$ requires enumerating the vocabulary space only once, therefore finding $Q^*(\mathcal{W})$ is polynomial ($\mathcal{O}(|\mathbf{w}^{vic}|\cdot |\mathcal{V}^{vocab}|)$) in the document size $|\mathbf{w}^{vic}|$ and the vocabulary size $|\mathcal{V}^{vocab}|$. 

We construct the correspondence as: $d \xleftrightarrow\ |\mathbf{w}^{vic}|$, $x_i=1 \xleftrightarrow\ w_i\in \mathcal{W}$, $f(x) \xleftrightarrow\ Q^*(\mathcal{W})$, $C \xleftrightarrow\ |\mathbf{w}^{vic}| \cdot  \kappa$. In this sense, we can easily get that if $x$ is an "yes" instance of Eq.\eqref{eq:copcc-instance}, then the corresponding $\mathcal{W}$ is an "yes" instance of Eq.\eqref{eq:attack-lda-instance} and vice-versa.
\end{proof}

\section{Proof of Lemma 1}
\begin{lemma}\label{lemma:recursive_theta}
In the above designed surrogate inference procedure, when $\alpha\rightarrow 0$,\footnote{Be reminded that the hyper-parameter $\alpha$ of the Dirichlet distribution can be interpreted as a regularization term of the document-topic distribution $\theta_m$ based on prior knowledge. In practice, $\alpha$ is a very small value that is approximately equal to $1/K$ where $K$ is the number of topics. Therefore the assumption is reasonably made.} there exists a recursive definition of the topic distribution $\theta_k^t$ for each topic $k=1,\dots,K$:
\begin{align}\label{eq:recursive_theta}
	\theta_{k}^t = \frac{\theta_k^{t-1}}{N} \sum_{v\in \mathcal{V}}\!n_v \varphi_{kv},
\end{align} where $t$ is the number of iterations in the CGS procedure, and $N=|\mathbf{w}_m|$ is the number of words in the test document $\mathbf{w}_m$.
\end{lemma}	
\begin{proof}
At iteration $t$, denote the full conditional probability of sampling a topic $k$ for word $v$ as $p_{kv}^t$, then $p_{kv}^t$ is the same at each of the $n_v$ sampling operations:
\begin{align*}
p_{kv}^t=\varphi_{kv} \cdot \frac{N_{k}^{t-1}+\alpha}{N^{t-1}+K\alpha}
\end{align*}
Here since only the test document $\mathbf{w}_m$ is involved, we have omitted the sub-script $m$ for clarity of notation. 

Because the surrogate inference procedure goes over the entire document at each iteration $t$, the sum of topic count always equals the total number of words $N$ in the test document. When $\alpha\rightarrow 0$ and $K\alpha \rightarrow 1$, the above equation is re-written as:
\begin{align*}
p_{kv}^t=\varphi_{kv} \cdot \frac{N_{k}^{t-1}}{N}
\end{align*}
The approximation on the denominator holds as $N\gg 1$.

When sampling repeats $n_v$ times, the \textit{expected} count of topic $k$ being sampled at word $v$ is $N_{kv}^t=n_v p_{kv}^t$.
Note that we have omitted the expectation symbol for clarify of notation. Thus, the total \textit{expected} count of topic $k$ for the test document is:
\begin{align*}
N_{k}^t=\sum_{v\in \mathcal{V}}n_v p_{kv}^t
\end{align*}
Combining the above two equations,
\begin{align*}
	N_{k}^t=\!\sum_{v\in \mathcal{V}}\!n_v \varphi_{kv}  \frac{N_{k}^{t-1}}{N}\! =\! \frac{N_{k}^{t-1}}{N} \!\sum_{v\in \mathcal{V}}\!n_v \varphi_{kv}   
\end{align*}
The second equation holds because $\frac{N_{k}^{t-1}}{N}$ does not depend on $v$.
According to Eq.(2) in the main text,
\begin{align*}
\theta_k = \frac{N_k^t+\alpha}{N + K\alpha} \rightarrow \frac{N_k^t}{N}
\end{align*} when $\alpha \rightarrow 0$. We can thus derive the recursive equation in Eq.\eqref{eq:recursive_theta}.
\end{proof}

\section{Results on the AP Dataset}
In this section, we will show the attack performence of different algorithms on AP dataset. Without otherwise specified, the main parameters are the same as the experiments on the NIPS dataset. 

\paragraph{Summary of results.} Overall, the patterns of the results for the AP dataset are very similar to that for the NIPS dataset. In terms of effectiveness (CoR and CoPS), the  margins of EvaLDA against the baselines in promotion attack are smaller, while in demotion attack the margins are larger. In terms of evasiveness (AWD and ASSD), the performances of EvaLDA are very close to that of the NIPS dataset. \textbf{This shows that the performance of EvaLDA is stable across datasets.}

\subsection{Promotion Attack}
For the AP dataset, the different experiment settings for promotion attack are the same as before:i) Approximation level $t$ (default 4); ii) Word substituion strategies (default mix); iii) Perturbation threshold $\kappa$, which ranges over $[1\%, 2\%, 3\%, 4\%]$ (default $\kappa=2\%$).
(iv) Original rank of target topic, which ranges over $[4, 6, 8, 10]$ (default $6$). 
As shown in Table 1 in the main text, documents in the AP data are much shorter (192 on average) than the NIPS dataset (3211 on average), using a perturbation threshold of $1\%$ means for some documents whose length is less than 200, only one word can be perturbed. This makes it too challenging for the attack algorithms including EvaLDA. Therefore we set larger default perturbation thresholds for the experiments on the AP dataset. Similarly, because the documents are much shorter in the AP dataset, the number of topics with non-zero probability scores are also much smaller than that in the NIPS dataset. Therefore, the original rank of the target topic is set higher and denser than that in the NIPS dataset.
We still randomly choose $50$ samples and set word distance threshold $\sigma = 0.6$.

\paragraph{Approximation level $t$}
Figure \ref{ldac_diff_level} shows the results with different $t$ values from $1$ to $6$. For all the metrics, similar patterns are observed compared with experiments on the NIPS dataset. 
Note that we set the perturbation threshold as $2\%$ perturbation, original rank $6$ and mixed word substitution strategy in this setting.
Based on this set of experiments, we choose $t=4$ in the following experiments.

\paragraph{Word Substitution Strategiesl}
Table \ref{tab:ldac_diff_similar} shows the result when varying word substitution strategies. Similarly, EM has better performance than Syno. A mix of the two has the best performance in CoR and CoPS, despite that AWD and ASSD of mix are almost the same as EM.

\paragraph{Perturbation Threshold} 
Figure \ref{ldac_diff_thre} shows performance comparisons of EvaLDA w.r.t. baselines with varying perturbation threshold. In terms of effectiveness (CoR and CoPS), EvaLDA is superior to the baseline methods under almost all threshold values. Note that \textbf{B3} also performs well under different perturbation thresholds. This is because \textbf{B3} selects a random replacement word for the target word and it is more likely to shift the target topic. However, such attacks are impractical as it is very detectable due to the large word-level and sentence variations, which are indicated by the very large AWD and ASSD values.

\paragraph{Original Ranking of Target Topic}
As shown in Figure \ref{ldac_diff_rank}, the overall results are similar to those of the NIPS dataset. The lower ranked topics are easier to promote, even if topics with higher rankings can get more score promotion, its ranks are still relatively stable. As we can see, EvaLDA performs better in CoR and CoPS, and also competitive in AWD and ASSD.

\paragraph{Original Ranking of Target Topic}
Figure \ref{ldac_demotion} shows the topic demotion attack performance of different methods under different orginal rank of the target topic. In demotion attack, we can see that EvaLDA beats the baselines by a very large margin for CoR and CoPS, while having the best AWD and ASSD values as well, which are even smaller than \textbf{B4}. 

\begin{figure*}
	\centering
	\subfigure{
		\begin{minipage}[t]{0.49\columnwidth}
			\centering
			\includegraphics[width=\textwidth]{./ldac_new/level/diff_level_rank.eps}
		\end{minipage}%
	}%
	\subfigure{
		\begin{minipage}[t]{0.49\columnwidth}
			\centering
			\includegraphics[width=\textwidth]{./ldac_new/level/diff_level_score.eps}
		\end{minipage}%
	}%
	\subfigure{
		\begin{minipage}[t]{0.49\columnwidth}
			\centering
			\includegraphics[width=\textwidth]{./ldac_new/level/diff_level_AWD.eps}
		\end{minipage}
	}%
	\subfigure{
		\begin{minipage}[t]{0.49\columnwidth}
			\centering
			\includegraphics[width=\textwidth]{./ldac_new/level/diff_level_ASSD.eps}
		\end{minipage}
	}%
	
	\centering
	\caption{Promotion attack with varying approximate levels, on the AP dataset, showing 95\% confidence interval. }
	\label{ldac_diff_level}
\end{figure*}

\begin{figure*}
	\centering
	\subfigure{
		\begin{minipage}[t]{0.49\columnwidth}
			\centering
			\includegraphics[width=\textwidth]{./ldac_new/thre/diff_thre_rank.eps}
		\end{minipage}%
	}%
	\subfigure{
		\begin{minipage}[t]{0.49\columnwidth}
			\centering
			\includegraphics[width=\textwidth]{./ldac_new/thre/diff_thre_score.eps}
		\end{minipage}%
	}%
	\subfigure{
		\begin{minipage}[t]{0.49\columnwidth}
			\centering
			\includegraphics[width=\textwidth]{./ldac_new/thre/diff_thre_AWD.eps}
		\end{minipage}
	}%
	\subfigure{
		\begin{minipage}[t]{0.49\columnwidth}
			\centering
			\includegraphics[width=\textwidth]{./ldac_new/thre/diff_thre_ASSD.eps}
		\end{minipage}
	}%
	
	\centering
	\caption{Promotion attack with varying perturbation threshold, on the AP dataset, showing 95\% confidence interval.}
	\label{ldac_diff_thre}
\end{figure*}
	
\begin{figure*}
	\centering
	\subfigure{
		\begin{minipage}[t]{0.49\columnwidth}
			\centering
			\includegraphics[width=\textwidth]{./ldac_new/rank/diff_rank_rank.eps}
		\end{minipage}%
	}%
	\subfigure{
		\begin{minipage}[t]{0.49\columnwidth}
			\centering
			\includegraphics[width=\textwidth]{./ldac_new/rank/diff_rank_score.eps}
		\end{minipage}%
	}%
	\subfigure{
		\begin{minipage}[t]{0.49\columnwidth}
			\centering
			\includegraphics[width=\textwidth]{./ldac_new/rank/diff_rank_AWD.eps}
		\end{minipage}
	}%
	\subfigure{
		\begin{minipage}[t]{0.49\columnwidth}
			\centering
			\includegraphics[width=\textwidth]{./ldac_new/rank/diff_rank_ASSD.eps}
		\end{minipage}
	}%
	
	\centering
	\caption{Promotion attack with varying original rank of the target topic, on the AP dataset, showing 95\% confidence interval.}
	\label{ldac_diff_rank}
\end{figure*}

\begin{figure*}
	\centering
	\subfigure{
		\begin{minipage}[t]{0.49\columnwidth}
			\centering
			\includegraphics[width=\textwidth]{./ldac_new/demotion/demotion_rank.eps}
		\end{minipage}%
	}%
	\subfigure{
		\begin{minipage}[t]{0.49\columnwidth}
			\centering
			\includegraphics[width=\textwidth]{./ldac_new/demotion/demotion_score.eps}
		\end{minipage}%
	}%
	\subfigure{
		\begin{minipage}[t]{0.49\columnwidth}
			\centering
			\includegraphics[width=\textwidth]{./ldac_new/demotion/demotion_AWD.eps}
		\end{minipage}
	}%
	\subfigure{
		\begin{minipage}[t]{0.49\columnwidth}
			\centering
			\includegraphics[width=\textwidth]{./ldac_new/demotion/demotion_ASSD.eps}
		\end{minipage}
	}%
	
	\centering
	\caption{Demotion attack with varying original rank of target topic, on the AP dataset, showing 95\% confidence interval.}
	\label{ldac_demotion}
\end{figure*}

\begin{table}[htp]
\caption{The result of different word replacement strategies on the AP dataset. EM, Syno, and Mix respectively stand for embedding, synonyms using WordNet and a mixture of the two. The best result is highlighted in bold fonts.}\label{tab:ldac_diff_similar}
\centering\scriptsize
\begin{tabular}{lllll}
\hline
 & CoR & CoPS & AWD & ASSD \\ \hline
EM & 0.8913$\pm$0.2642 & 0.0225$\pm$0.0073 & 0.3147$\pm$0.0176 & 0.0142$\pm$0.0017 \\
Syno & 0.4348$\pm$0.2459 & 0.0116$\pm$0.0077 & \textbf{0.2602$\pm$0.0183} & \textbf{0.0104$\pm$0.0013} \\
Mix & \textbf{0.9348$\pm$0.2722} & \textbf{0.0232$\pm$0.0075} & 0.3149$\pm$0.0180 & 0.0143$\pm$0.0017 \\ \hline
\end{tabular}
\end{table}

\bibstyle{aaai21}
\bibliography{lda_attack}


\linenumbers
\maketitle

\section{Proof of Theorem 1}
\begin{theorem}
Given an oracle that tells the algorithm the explicit value of $Q(\mathcal{W},\mathcal{W}')$ for an attack strategy $(\mathcal{W},\mathcal{W}')$, the \textit{Attack-LDA} problem formulated above is $\mathcal{NP}$-\textit{hard}.
\end{theorem}
\begin{proof}
The key idea is to reduce the (binary) combinatorial optimization problem with cardinality constraints (COPCC, which is proven to be $\mathcal{NP}$-\textit{hard}~\cite{bruglieri2006annotated}) to our defined \textit{Attack-LDA} problem. 

An \textit{arbitrary} instance of COPCC can be expressed as:
\begin{align}\label{eq:copcc-instance}
\min_x  f(x) \ \ 
\text{s.t.} \ \  x\in \{0,1\}^d : ||x||_0\leq C,
\end{align} where $x$ is a $d$-dimensional binary indicator vector which corresponds to the selection of items. That is, $x_i=1$ indicates the $i$-th item is selected and otherwise not. $|| x||_0$ denotes $l$-0 norm of $x$. 

We then construct a \textit{special} instance of the \textit{Attack-LDA} problem as follows. We first let 
\begin{align*}
Q^*(\mathcal{W}) &= \max_{\mathcal{W}'} Q(\mathcal{W},\mathcal{W}')\\
\text{s.t.} \quad &D(w,w')\leq \sigma, \quad \forall (w,w') \in (\mathcal{W}, \mathcal{W}').
\end{align*} The \textit{Attack-LDA} problem is then transformed as 
\begin{align}\label{eq:attack-lda-instance}
\max_{\mathcal{W}} \quad Q^*(\mathcal{W}) \ \ 
\text{s.t.} \ \  | \mathcal{W}| \leq |\mathbf{w}^{vic}| \cdot  \kappa
\end{align}
Because finding the optimal $w'$ for a given $w$ requires enumerating the vocabulary space only once, therefore finding $Q^*(\mathcal{W})$ is polynomial ($\mathcal{O}(|\mathbf{w}^{vic}|\cdot |\mathcal{V}^{vocab}|)$) in the document size $|\mathbf{w}^{vic}|$ and the vocabulary size $|\mathcal{V}^{vocab}|$. 

We construct the correspondence as: $d \xleftrightarrow\ |\mathbf{w}^{vic}|$, $x_i=1 \xleftrightarrow\ w_i\in \mathcal{W}$, $f(x) \xleftrightarrow\ Q^*(\mathcal{W})$, $C \xleftrightarrow\ |\mathbf{w}^{vic}| \cdot  \kappa$. In this sense, we can easily get that if $x$ is an "yes" instance of Eq.\eqref{eq:copcc-instance}, then the corresponding $\mathcal{W}$ is an "yes" instance of Eq.\eqref{eq:attack-lda-instance} and vice-versa.
\end{proof}

\section{Proof of Lemma 1}
\begin{lemma}\label{lemma:recursive_theta}
In the above designed surrogate inference procedure, when $\alpha\rightarrow 0$,\footnote{Be reminded that the hyper-parameter $\alpha$ of the Dirichlet distribution can be interpreted as a regularization term of the document-topic distribution $\theta_m$ based on prior knowledge. In practice, $\alpha$ is a very small value that is approximately equal to $1/K$ where $K$ is the number of topics. Therefore the assumption is reasonably made.} there exists a recursive definition of the topic distribution $\theta_k^t$ for each topic $k=1,\dots,K$:
\begin{align}\label{eq:recursive_theta}
	\theta_{k}^t = \frac{\theta_k^{t-1}}{N} \sum_{v\in \mathcal{V}}\!n_v \varphi_{kv},
\end{align} where $t$ is the number of iterations in the CGS procedure, and $N=|\mathbf{w}_m|$ is the number of words in the test document $\mathbf{w}_m$.
\end{lemma}	
\begin{proof}
At iteration $t$, denote the full conditional probability of sampling a topic $k$ for word $v$ as $p_{kv}^t$, then $p_{kv}^t$ is the same at each of the $n_v$ sampling operations:
\begin{align*}
p_{kv}^t=\varphi_{kv} \cdot \frac{N_{k}^{t-1}+\alpha}{N^{t-1}+K\alpha}
\end{align*}
Here since only the test document $\mathbf{w}_m$ is involved, we have omitted the sub-script $m$ for clarity of notation. 

Because the surrogate inference procedure goes over the entire document at each iteration $t$, the sum of topic count always equals the total number of words $N$ in the test document. When $\alpha\rightarrow 0$ and $K\alpha \rightarrow 1$, the above equation is re-written as:
\begin{align*}
p_{kv}^t=\varphi_{kv} \cdot \frac{N_{k}^{t-1}}{N}
\end{align*}
The approximation on the denominator holds as $N\gg 1$.

When sampling repeats $n_v$ times, the \textit{expected} count of topic $k$ being sampled at word $v$ is $N_{kv}^t=n_v p_{kv}^t$.
Note that we have omitted the expectation symbol for clarify of notation. Thus, the total \textit{expected} count of topic $k$ for the test document is:
\begin{align*}
N_{k}^t=\sum_{v\in \mathcal{V}}n_v p_{kv}^t
\end{align*}
Combining the above two equations,
\begin{align*}
	N_{k}^t=\!\sum_{v\in \mathcal{V}}\!n_v \varphi_{kv}  \frac{N_{k}^{t-1}}{N}\! =\! \frac{N_{k}^{t-1}}{N} \!\sum_{v\in \mathcal{V}}\!n_v \varphi_{kv}   
\end{align*}
The second equation holds because $\frac{N_{k}^{t-1}}{N}$ does not depend on $v$.
According to Eq.(2) in the main text,
\begin{align*}
\theta_k = \frac{N_k^t+\alpha}{N + K\alpha} \rightarrow \frac{N_k^t}{N}
\end{align*} when $\alpha \rightarrow 0$. We can thus derive the recursive equation in Eq.\eqref{eq:recursive_theta}.
\end{proof}

\section{Results on the AP Dataset}
In this section, we will show the attack performence of different algorithms on AP dataset. Without otherwise specified, the main parameters are the same as the experiments on the NIPS dataset. 

\paragraph{Summary of results.} Overall, the patterns of the results for the AP dataset are very similar to that for the NIPS dataset. In terms of effectiveness (CoR and CoPS), the  margins of EvaLDA against the baselines in promotion attack are smaller, while in demotion attack the margins are larger. In terms of evasiveness (AWD and ASSD), the performances of EvaLDA are very close to that of the NIPS dataset. \textbf{This shows that the performance of EvaLDA is stable across datasets.}

\subsection{Promotion Attack}
For the AP dataset, the different experiment settings for promotion attack are the same as before:i) Approximation level $t$ (default 4); ii) Word substituion strategies (default mix); iii) Perturbation threshold $\kappa$, which ranges over $[1\%, 2\%, 3\%, 4\%]$ (default $\kappa=2\%$).
(iv) Original rank of target topic, which ranges over $[4, 6, 8, 10]$ (default $6$). 
As shown in Table 1 in the main text, documents in the AP data are much shorter (192 on average) than the NIPS dataset (3211 on average), using a perturbation threshold of $1\%$ means for some documents whose length is less than 200, only one word can be perturbed. This makes it too challenging for the attack algorithms including EvaLDA. Therefore we set larger default perturbation thresholds for the experiments on the AP dataset. Similarly, because the documents are much shorter in the AP dataset, the number of topics with non-zero probability scores are also much smaller than that in the NIPS dataset. Therefore, the original rank of the target topic is set higher and denser than that in the NIPS dataset.
We still randomly choose $50$ samples and set word distance threshold $\sigma = 0.6$.

\paragraph{Approximation level $t$}
Figure \ref{ldac_diff_level} shows the results with different $t$ values from $1$ to $6$. For all the metrics, similar patterns are observed compared with experiments on the NIPS dataset. 
Note that we set the perturbation threshold as $2\%$ perturbation, original rank $6$ and mixed word substitution strategy in this setting.
Based on this set of experiments, we choose $t=4$ in the following experiments.

\paragraph{Word Substitution Strategiesl}
Table \ref{tab:ldac_diff_similar} shows the result when varying word substitution strategies. Similarly, EM has better performance than Syno. A mix of the two has the best performance in CoR and CoPS, despite that AWD and ASSD of mix are almost the same as EM.

\paragraph{Perturbation Threshold} 
Figure \ref{ldac_diff_thre} shows performance comparisons of EvaLDA w.r.t. baselines with varying perturbation threshold. In terms of effectiveness (CoR and CoPS), EvaLDA is superior to the baseline methods under almost all threshold values. Note that \textbf{B3} also performs well under different perturbation thresholds. This is because \textbf{B3} selects a random replacement word for the target word and it is more likely to shift the target topic. However, such attacks are impractical as it is very detectable due to the large word-level and sentence variations, which are indicated by the very large AWD and ASSD values.

\paragraph{Original Ranking of Target Topic}
As shown in Figure \ref{ldac_diff_rank}, the overall results are similar to those of the NIPS dataset. The lower ranked topics are easier to promote, even if topics with higher rankings can get more score promotion, its ranks are still relatively stable. As we can see, EvaLDA performs better in CoR and CoPS, and also competitive in AWD and ASSD.

\paragraph{Original Ranking of Target Topic}
Figure \ref{ldac_demotion} shows the topic demotion attack performance of different methods under different orginal rank of the target topic. In demotion attack, we can see that EvaLDA beats the baselines by a very large margin for CoR and CoPS, while having the best AWD and ASSD values as well, which are even smaller than \textbf{B4}. 

\begin{figure*}
	\centering
	\subfigure{
		\begin{minipage}[t]{0.49\columnwidth}
			\centering
			\includegraphics[width=\textwidth]{./ldac_new/level/diff_level_rank.eps}
		\end{minipage}%
	}%
	\subfigure{
		\begin{minipage}[t]{0.49\columnwidth}
			\centering
			\includegraphics[width=\textwidth]{./ldac_new/level/diff_level_score.eps}
		\end{minipage}%
	}%
	\subfigure{
		\begin{minipage}[t]{0.49\columnwidth}
			\centering
			\includegraphics[width=\textwidth]{./ldac_new/level/diff_level_AWD.eps}
		\end{minipage}
	}%
	\subfigure{
		\begin{minipage}[t]{0.49\columnwidth}
			\centering
			\includegraphics[width=\textwidth]{./ldac_new/level/diff_level_ASSD.eps}
		\end{minipage}
	}%
	
	\centering
	\caption{Promotion attack with varying approximate levels, on the AP dataset, showing 95\% confidence interval. }
	\label{ldac_diff_level}
\end{figure*}

\begin{figure*}
	\centering
	\subfigure{
		\begin{minipage}[t]{0.49\columnwidth}
			\centering
			\includegraphics[width=\textwidth]{./ldac_new/thre/diff_thre_rank.eps}
		\end{minipage}%
	}%
	\subfigure{
		\begin{minipage}[t]{0.49\columnwidth}
			\centering
			\includegraphics[width=\textwidth]{./ldac_new/thre/diff_thre_score.eps}
		\end{minipage}%
	}%
	\subfigure{
		\begin{minipage}[t]{0.49\columnwidth}
			\centering
			\includegraphics[width=\textwidth]{./ldac_new/thre/diff_thre_AWD.eps}
		\end{minipage}
	}%
	\subfigure{
		\begin{minipage}[t]{0.49\columnwidth}
			\centering
			\includegraphics[width=\textwidth]{./ldac_new/thre/diff_thre_ASSD.eps}
		\end{minipage}
	}%
	
	\centering
	\caption{Promotion attack with varying perturbation threshold, on the AP dataset, showing 95\% confidence interval.}
	\label{ldac_diff_thre}
\end{figure*}
	
\begin{figure*}
	\centering
	\subfigure{
		\begin{minipage}[t]{0.49\columnwidth}
			\centering
			\includegraphics[width=\textwidth]{./ldac_new/rank/diff_rank_rank.eps}
		\end{minipage}%
	}%
	\subfigure{
		\begin{minipage}[t]{0.49\columnwidth}
			\centering
			\includegraphics[width=\textwidth]{./ldac_new/rank/diff_rank_score.eps}
		\end{minipage}%
	}%
	\subfigure{
		\begin{minipage}[t]{0.49\columnwidth}
			\centering
			\includegraphics[width=\textwidth]{./ldac_new/rank/diff_rank_AWD.eps}
		\end{minipage}
	}%
	\subfigure{
		\begin{minipage}[t]{0.49\columnwidth}
			\centering
			\includegraphics[width=\textwidth]{./ldac_new/rank/diff_rank_ASSD.eps}
		\end{minipage}
	}%
	
	\centering
	\caption{Promotion attack with varying original rank of the target topic, on the AP dataset, showing 95\% confidence interval.}
	\label{ldac_diff_rank}
\end{figure*}

\begin{figure*}
	\centering
	\subfigure{
		\begin{minipage}[t]{0.49\columnwidth}
			\centering
			\includegraphics[width=\textwidth]{./ldac_new/demotion/demotion_rank.eps}
		\end{minipage}%
	}%
	\subfigure{
		\begin{minipage}[t]{0.49\columnwidth}
			\centering
			\includegraphics[width=\textwidth]{./ldac_new/demotion/demotion_score.eps}
		\end{minipage}%
	}%
	\subfigure{
		\begin{minipage}[t]{0.49\columnwidth}
			\centering
			\includegraphics[width=\textwidth]{./ldac_new/demotion/demotion_AWD.eps}
		\end{minipage}
	}%
	\subfigure{
		\begin{minipage}[t]{0.49\columnwidth}
			\centering
			\includegraphics[width=\textwidth]{./ldac_new/demotion/demotion_ASSD.eps}
		\end{minipage}
	}%
	
	\centering
	\caption{Demotion attack with varying original rank of target topic, on the AP dataset, showing 95\% confidence interval.}
	\label{ldac_demotion}
\end{figure*}

\begin{table}[htp]
\caption{The result of different word replacement strategies on the AP dataset. EM, Syno, and Mix respectively stand for embedding, synonyms using WordNet and a mixture of the two. The best result is highlighted in bold fonts.}\label{tab:ldac_diff_similar}
\centering\scriptsize
\begin{tabular}{lllll}
\hline
 & CoR & CoPS & AWD & ASSD \\ \hline
EM & 0.8913$\pm$0.2642 & 0.0225$\pm$0.0073 & 0.3147$\pm$0.0176 & 0.0142$\pm$0.0017 \\
Syno & 0.4348$\pm$0.2459 & 0.0116$\pm$0.0077 & \textbf{0.2602$\pm$0.0183} & \textbf{0.0104$\pm$0.0013} \\
Mix & \textbf{0.9348$\pm$0.2722} & \textbf{0.0232$\pm$0.0075} & 0.3149$\pm$0.0180 & 0.0143$\pm$0.0017 \\ \hline
\end{tabular}
\end{table}

\bibstyle{aaai21}
\bibliography{lda_attack}